\renewcommand{\algorithmiccomment}[1]{\hfill $\triangleright$ #1}
\theoremstyle{plain}
\newtheorem{theorem}{Theorem}[section]
\newtheorem{lemma}[theorem]{Lemma}
\theoremstyle{definition}
\theoremstyle{remark}
\newtheorem{remark}[theorem]{Remark}
\icmltitlerunning{An Effective Dynamic Gradient Calibration Method for Continual Learning}
\begin{document}

\twocolumn[
\icmltitle{An Effective Dynamic Gradient Calibration Method for Continual Learning}



\icmlsetsymbol{equal}{*}

\begin{icmlauthorlist}
\icmlauthor{Weichen Lin}{equal,yyy}
\icmlauthor{Jiaxiang Chen}{equal,yyy}
\icmlauthor{Ruomin Huang}{comp}
\icmlauthor{Hu Ding}{sch}

\end{icmlauthorlist}

\icmlaffiliation{yyy}{School of Data Science, University of Science
 and Technology of China, Anhui, China.}
\icmlaffiliation{comp}{Duke University} 
\icmlaffiliation{sch}{School of Computer Science and Technology, University of Science and Technology of China, Anhui, China}

\icmlcorrespondingauthor{Hu Ding}{huding@ustc.edu.cn}

\icmlkeywords{Machine Learning, ICML}

\vskip 0.3in
]



\printAffiliationsAndNotice{\icmlEqualContribution} 

\begin{abstract}
Continual learning (CL) is a fundamental topic in machine learning, where the goal is to train a model with continuously incoming data and tasks. Due to the memory limit, we cannot store all the historical data, and therefore confront the ``catastrophic forgetting'' problem, i.e., the performance on the previous tasks can substantially decrease because of the missing information in the latter period. Though a number of elegant methods have been proposed, the catastrophic forgetting phenomenon still cannot be well avoided in practice. In this paper, we study the problem from the gradient perspective, where our aim is to develop an effective algorithm to calibrate the gradient in each updating step of the model; namely, our goal is to guide the model to be updated in the right direction under the situation that a large amount of historical data are unavailable. Our idea is partly inspired by the seminal stochastic variance reduction methods (e.g., SVRG and SAGA)  for reducing the variance of gradient estimation in stochastic gradient descent algorithms. Another benefit is that our approach can be used as a general tool, which is able to be incorporated with several existing popular CL methods to achieve better performance. We also conduct a set of experiments on several benchmark datasets to evaluate the performance in practice. 
\end{abstract}

\section{Introduction}
\label{sec-introduction}
In the past years, Deep Neural Networks (DNNs) demonstrate remarkable performance for many different tasks in artificial intelligence, such as image generation \cite{ho2020denoising, goodfellow2014generative}, classification ~\cite{liu2021swin, he2016deep}, and pattern recognition~\cite{bai2021explainable,DBLP:journals/fcsc/ZhuYB16}. Usually we assume that the whole training data is stored in our facility and the DNN models can be trained offline by using Stochastic Gradient Descent (SGD) algorithms~\cite{bottou1991stochastic, bottou2018optimization}.  
However, real-world applications often require us to consider training lifelong models, where the tasks and data are accumulated in a streaming fashion \cite{van2019three, parisi2019continual}. For example, with the popularity of smart devices, a large amount of new data is generated every day. A  model needs to make full use of these new data to improve its performance while keeping old knowledge from being forgotten.  Those applications motivate us to  study the problem of {\em continual learning (CL)}~\cite{kirkpatrick2017overcoming, li2017learning}, where its goal is to develop effective method for gleaning insights from current data while retaining information from prior training data.


  A significant challenge that CL encounters is  ``{\em catastrophic forgetting}'' \cite{kirkpatrick2017overcoming,mccloskey1989catastrophic, DBLP:journals/corr/GoodfellowMDCB13}, wherein the exclusive focus on the current set of examples could result in a dramatic deterioration in the performance on previously learned data. This phenomenon is primarily attributed to limited storage and computational resources during the training process; otherwise, one could directly train the model from scratch using all the saved data. To address this issue, we need to develop efficient algorithm for training neural networks from a continuous stream of non-i.i.d. samples, with the goal of mitigating catastrophic forgetting while effectively managing computational costs and memory footprint.

  A number of elegant CL methods have been proposed to alleviate the catastrophic forgetting issue~\cite{wang2023comprehensive,de2021continual,mai2022online}. One representative CL approach is referred to as ``Experience Replay (ER)'' \cite{ratcliff1990connectionist, chaudhry2019tiny}, which has shown promising performance in several 
continual learning scenarios \cite{DBLP:conf/cvpr/PrabhuHDTLGB23, DBLP:conf/iclr/AraniSZ22, farquhar2018towards}. Roughly speaking, the ER method utilizes reservoir sampling~\cite{vitter1985random} to maintain historical data in the buffer, then extract new incoming training data with random samplings for learning the current task. 
Though the intuition is simple, the ER method currently is one of the  most popular CL approaches that incurs 
moderate computational and storage demands.
Moreover, several recently proposed approaches suggest that the ER method can be combined with knowledge distillation to further improve the performance; for example, 
the methods of DER/DER++ \cite{buzzega2020dark} and X-DER \cite{boschini2022class} preserve previous training samples alongside their logits in the model as the additional prior knowledge.
Besides the ER methods, there are also several other types of CL techniques proposed in recent years, and please 
refer to \cref{sec-related} for a detailed introduction.

\subsection{Our Main Ideas and Contributions}
\label{sec-ours}
  Though existing CL methods can alleviate the catastrophic forgetting issue from various aspects, the practical performances in some scenarios are still not quite satisfying \cite{DBLP:journals/tmlr/YuHHLWL23, tiwari2022gcr, DBLP:conf/cvpr/GhunaimBAAHPTG23}.
  In this paper, 
we study the continual learning problem from the gradient perspective, and the rationality behind is as follows. In essence, an approach for avoiding the catastrophic forgetting issue in CL,  e.g., the replay mechanisms or the regularization strategies, ultimately manifests its influence on the gradient directions during model updating~\cite{wang2023comprehensive}. If all the historical data are available, one could compute the gradient by using the stochastic gradient descent method and obviously the catastrophic forgetting phenomenon cannot happen. The previous  methodologies  aim to approximate the gradient by preserving additional information and incorporating it as a constraint to model updates, thereby retaining historical knowledge. However,   the replay-based methods in practice are often limited by storage capacity, which leads to a substantial loss of historical data information and inaccurate estimation of historical gradients \cite{yan2021dynamically}.
Therefore, our goal is to develop a more accurate gradient calibration algorithm in each step of the continual learning procedure, which can directly enhance the training quality. 

 We are aware of several existing CL methods that also take into account of the gradients~\cite{DBLP:conf/iclr/LiuL22, tiwari2022gcr, farajtabar2020orthogonal}, but our idea proposed here is fundamentally different. 
 We draw inspiration from the seminal {\em stochastic variance reduction} methods (e.g., SVRG from \citet{johnson2013accelerating} and SAGA from \citet{defazio2014saga}), which are originally designed to reduce the gradient variance so that the estimated gradient can closely align with the true full gradient over the entire dataset (including the current and historical data). These variance reduction methods have been extensively studied in the line of the research on stochastic gradient descent method~\cite{jin2019towards,babanezhad2015stopwasting,lei2017non}; their key idea is to leverage the additional saved full gradient information to calibrate the gradient in the current training step, which leads to significantly reduced gradient variance comparing with the standard SGD method.  This intuition also inspires us to handle the CL problem. In a standard SGD method, the variance between the obtained gradient and the full gradient is due to the ``batch size limit" (if the batch size has no bound, we can simply compute the full gradient).  Recall that the challenge of CL is due to the ``buffer size limit'', which impedes the use of full historical data (this is similar with the dilemma encountered by SGD with the ``batch size limit''). So an interesting question is 
 
\hspace{0.2in} \textit{Can the calibration idea for ``batch size limit'' be modified to handle ``buffer size limit''? Specifically,  is it possible to develop an effecitve method to compute a SVRG (or SAGA)-like calibration for the gradient in  CL scenarios?}

  Obviously, it is challenging to directly implement the SVRG or SAGA algorithms in continual learning because of the missing historical data. 
Note that \citet{frostig2015competing} proposed a streaming SVRG (SSVRG) method that realizes the SVRG method within a given fixed buffer, but unfortunately it does not perform quite well in the CL scenarios (as shown in our experimental section). One possible reason is that SSVRG can only leverage information within the buffer and fails to utilize all historical information. 

In this paper, we aim to apply the intuition of SVRG to handle the ``buffer size limit'' in CL, and our contributions can be summarized as follows:

\begin{itemize}
    \item First, we propose a novel two-level dynamic algorithm, named \textbf{Dynamic Gradient Calibration (DGC)},   to maintain a gradient calibration in continual learning. DGC can effectively tackle the storage limit and leverage historical data to calibrate the gradient of the model at each current stage. 
   Moreover, our theoretical analysis shows that our DGC based CL algorithm can achieve a linear convergence rate. 

    \begin{figure}[t]
    \centering
    \includegraphics[width=0.7\linewidth]{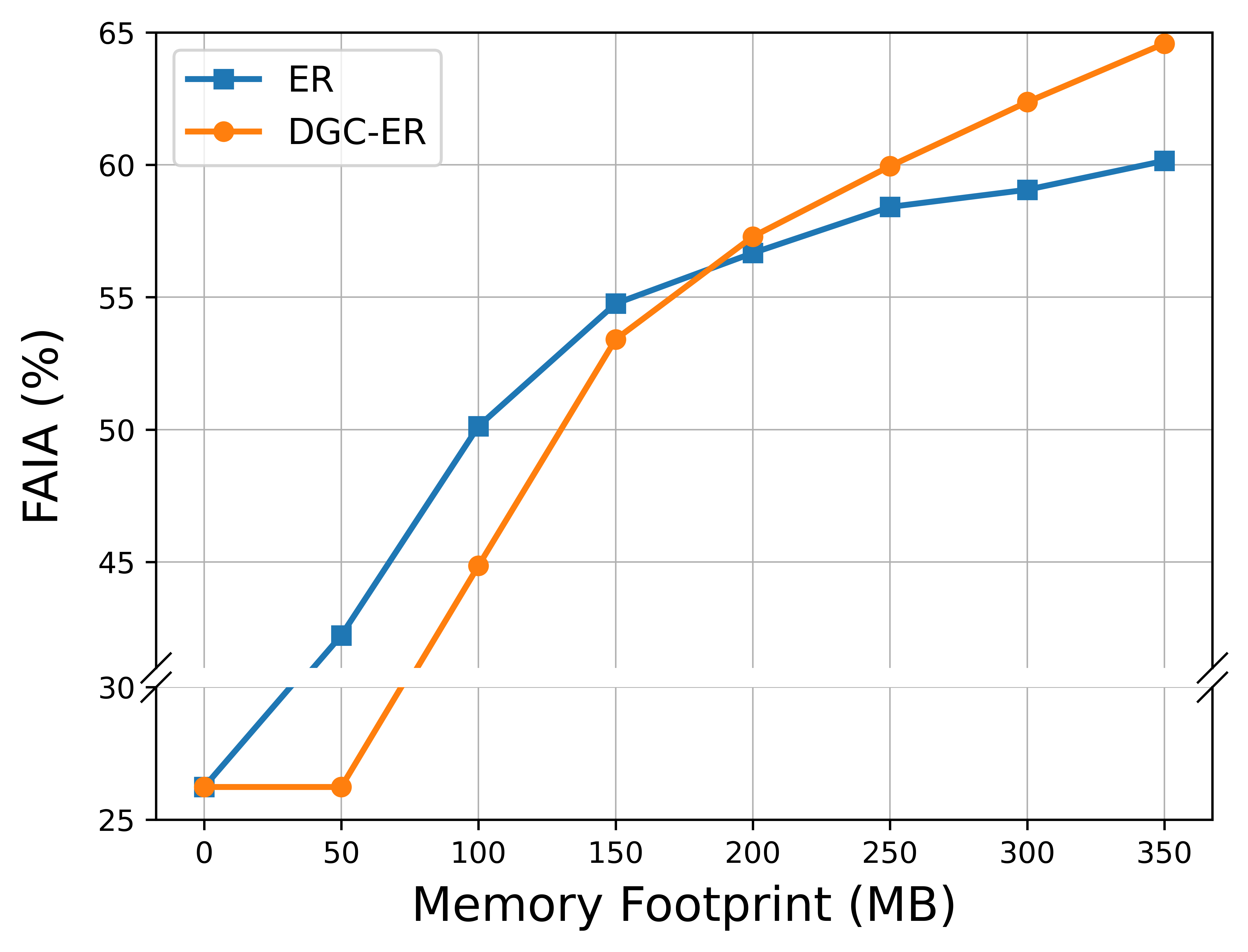} 
    \caption{Performance comparison of the ER method and our proposed DGC-ER method with increasing storage memory size on CIFAR100.  The x-axis denotes the actual memory footprint (MB) and the y-axis denotes the final Average Incremental Accuracy (FAIA) ~\cite{hou2019learning, douillard2020podnet} (the formal definition is shown in \cref{Sec: Experiment}). The curves in the figure indicate that   although ER outperforms DGC-ER at low MB, as MB increases, the dynamic gradient calibration method can achieve a larger marginal benefit than simply increasing the sample size of ER.  } 
    \label{fig: MemorySize vs AIA}  
    \vskip -0.2in
    \end{figure}
   
    \item Second, our method can be conveniently integrated with most existing reservoir sampling-based continual learning approaches (e.g., ER \cite{ratcliff1990connectionist}, DER/DER++ \cite{buzzega2020dark}, XDER \cite{boschini2022class}, and Dynamic ER \cite{yan2021dynamically}), where this hybrid framework can 
    induce a significant enhancement to the overall model performance. 
    Note that storing the gradient calibrator can cause an extra memory footprint; so a key question is whether such an extra memory footprint can yield a larger marginal benefit than simply taking more samples to fill the extra memory? 
    In Fig \ref{fig: MemorySize vs AIA}, we illustrate a brief example to answer this question in the affirmative; more detailed evaluations are shown in the experimental part. 
    
    \item Finally, we conduct a set of comprehensive experiments on the popular datasets S-CIFAR10, S-CIFAR100 and S-TinyImageNet; the experimental results suggest that our method can improve the final Average Incremental Accuracy (FAIA) in several CL scenarios by more than $6\%$. Moreover, our improvement for a larger number of tasks is more significant than that for a smaller number. Furthermore, our adoption of the SVRG-inspired DGC calibration method leads to enhanced stability in minimizing the loss function throughout the parameter optimization process.
\end{itemize}
\subsection{Related Work}
\label{sec-related}
We briefly overview existing important continual learning approaches (except for the ones mentioned before).  We also refer the reader to the recent surveys~\cite{wang2023comprehensive,de2021continual,mai2022online} for more details.


  A large number of CL methods are replay based, where they often keep a part of previous data through approaches like reservoir sampling \cite{DBLP:conf/iclr/ChaudhryRRE19, DBLP:conf/iclr/RiemerCALRTT19}. Several more advanced data selection strategies focus on optimizing the factors like the sample diversity of parameter gradients or the similarity to previous gradients on passed data, e.g.,  GSS \cite{aljundi2019gradient} and   GCR \cite{tiwari2022gcr}.  
  Experience replay can be effectively combined with knowledge distillation. For example, \citet{hu2021distilling} proposed to distill colliding effects from the features for new coming tasks, and ICARL~\cite{rebuffi2017icarl} proposed to take account of the data representation trained on old data. MOCA \cite{DBLP:journals/tmlr/YuHHLWL23} improves replay-based methods by diversifying representations  in the  space. Another replay-based approach is based on generative replay, which obtains replay data by generative models~\cite{shin2017continual, DBLP:conf/icml/GaoL23a, wu2018memory}.


  Another way for solving continual learning is through some deliberately designed optimization procedures. For example, the methods GEM \cite{lopez2017gradient}, AGEM \cite{DBLP:conf/iclr/ChaudhryRRE19}, and MER \cite{DBLP:conf/iclr/RiemerCALRTT19} restrict parameter updates to align with the experience replay direction, and thereby preserve the previous input and gradient space with old training samples. Different from saving old training samples, \citet{farajtabar2020orthogonal} proposed to adapt parameter updates in the orthogonal direction of the previously saved gradient. {  The method AOP \cite{guo2022adaptive} projects the gradient in the direction orthogonal to the subspace spanned by all previous task inputs, therefore it only keeps an orthogonal projector rather than storing previous data.}



  To mitigate the problem of forgetting, we can also augment the model capacity for learning new tasks. \citet{xu2018reinforced} tried to enhance model performance by employing meta-learning techniques  when dynamically extending the model.
The method ANCL \cite{DBLP:conf/cvpr/KimNOH23} proposes to utilize an auxiliary network to achieve a trade-off between 
plasticity 
and 
stability. Dynamic ER \cite{yan2021dynamically} introduces a novel two-stage learning method that employs a dynamically expandable representation for learning knowledge incrementally. 

\section{Preliminaries}
We consider the task of training a soft-classification function $\boldsymbol{f(\cdot ;\theta)$: $\mathcal{X}\to \mathcal{Y}}$, where $\mathcal{X}$ and $\mathcal{Y}$ respectively represent the space of data and the set of labels, and $\theta$ is the parameter to optimize. Without loss of generality, we assume $\mathcal{Y}=\{1, 2, \cdots, K\}$. So 
 $f(\cdot; \theta)$ maps each $x\in\mathcal{X}$ to some $f(x; \theta)\in\mathbb{R}^K$. 
 To find an appropriate $\theta$, the classification function  $f$ is usually equipped with a loss function $\ell (f(x;\theta), y)$, which is differentiable for the variables $x$ and $\theta$ (e.g., cross-entropy loss). To simplify the notation, we use $ \ell(x,y,\theta)$ to denote $ \ell(f(x;\theta), y)$.  Given a set of data $P=\{(x^i, y^i)\mid 1\leq i\leq n\} = \mathcal{X}_P\times \mathcal{Y}_P \subset \mathcal{X}\times \mathcal{Y}$, the training process is to find a $\theta$ such that the empirical risk of $\ell(x,y, \theta)$, i.e., $\sum^n_{i=1}\ell(x^i,y^i, \theta)$, is minimized. We define the full gradient of $ \ell(x,y, \theta)$ on $P$ as 
\begin{align}\label{def: full gradient}
    \mathcal{G}(P, \theta) & \triangleq \nabla_\theta \ell(\mathcal{X}_P, \mathcal{Y}_P, \theta) = \frac{1}{n} \sum^n_{i=1} \nabla_\theta \ell(x^i, y^i, \theta). 
\end{align}

\subsection{Continual Learning Models}
\label{subsec:CiCL}
In this paper, we focus on two popular CL models: {\em Class-Incremental Learning (\textbf{CIL})}~\cite{hsu2018re} and {\em Task-Free Continual Learning (\textbf{TFCL})}~\cite{aljundi2019task}.

  In the setting of CIL, the training tasks come in a sequence $\{\mathcal{T}_1, \mathcal{T}_2, \cdots, \mathcal{T}_T\}$ with disjoint label space; each time spot $t \in \{1,2,\cdots,T\}$ corresponds to the task $\mathcal{T}_t$ with a training dataset $\{(x_t^i,y_t^i)\mid 1\leq i\leq n_t\}$. With a slight abuse of notations, we also use $\mathcal{T}_t$ to denote its training dataset. Also, we use $\mathcal{Y}_t$ to denote the corresponding set of labels $\{y_t^i\mid 1\leq i\leq n_t\}$. 
Although the training data for individual tasks $\mathcal{T}_t$ is independently and identically distributed (i.i.d.), it is worth noting that the overall task stream $\{\mathcal{T}_1, \mathcal{T}_2, \cdots, \mathcal{T}_T\}$ does not adhere to the i.i.d. assumption due to the evolving label space over time.  We define the overall risk under this CL setting 
  at the current time spot $t$ 
 as follows: 
\begin{align}
\ell_{CL}^t (\theta) \triangleq \frac{1}{t} \sum_{c = 1}^{t} \underset{(x, y) \sim \mathcal{T}_{c}}{\mathbb{E}}[\ell(x,y,\theta)]. 
\end{align}

  The setting of TFCL is similar to CIL, where the major difference is that the task identities are not provided in neither the training nor testing procedures. So the TFCL setting is more challenging than CIL because the algorithm is unaware of task changes and the current task identity. In the main part of our paper, we present our results in CIL; in our supplement, we  explain how to extend our results to TFCL.


\subsection{Variance Reduction Methods}
\label{subsec:SVRG}
A comprehensive introduction to variance reduction methods is provided in~\cite{gower2020variance}. Here 
we particularly introduce 
one representative variance reduction method \textbf{SVRG}~\cite{johnson2013accelerating} 
, which is closely related to our proposed CL approach. 

The high-level idea of SVRG is to construct a calibration term to reduce the variance in the gradient estimate. The complete optimization process can be segmented into a sequence of stages. We denote the training data as $P = \{(x^i,y^i)\}_{i=1}^{n} \subset \mathcal{X}\times \mathcal{Y}$,  and denote the parameter at the beginning of each stage as $\tilde{\theta}$.  The key part of SVRG is to  minimize the variance in SGD optimization by computing 
an additional term $\tilde{\mu}$:
\begin{align}
\label{for-svrgu}
\tilde{\mu}  &  \triangleq  \mathcal{G}(P, \tilde{\theta}) =  \frac{1}{n} \sum_{i = 1}^{n} \nabla_\theta \ell(x^i,y^i, \tilde{\theta}). 
\end{align}
In each stage, 
SVRG applies the standard SGD with the term $\tilde{\mu}$ to update the parameter $\theta$: 
let
\begin{align}\label{eq:SVRG}
v^k  &=  \nabla_\theta \ell(x^i, y^i,\theta^k) - (\nabla_\theta \ell(x^{i}, y^{i}, \tilde{\theta}) -\tilde{\mu}), \\
\theta^{k+1} & = \theta^{k} - \eta v^k,
\end{align}
where $(x^i,y^i) \in P$ is the sampled training data, $\theta^k$ represents the  parameter at the $k$-th step of SGD, and $\eta$ is the learning rate. Since $\mathbb{E}[\nabla_\theta \ell(x^{i}, y^{i}, \tilde{\theta^k})] = \tilde{\mu}$, $v^k $ is an unbiased estimate of the gradient $\mathcal{G}(P,\theta^k)$. Subsequently, the term ``$ \nabla_\theta \ell(x^{i}, y^{i}, \tilde{\theta}) - \tilde{\mu}$'' in Eq (\ref{eq:SVRG}) can be regarded as a \textbf{calibrator} to reduce the variance of gradient estimation and achieve a linear convergence rate \cite{johnson2013accelerating}, which is faster than directly using $\nabla_\theta \ell(x^i, y^i,\theta^k)$.


\section{Our Proposed Method}
\label{Sec: DGC}
In this section, we propose 
the Dynamic Gradient Calibration (DGC) approach which maintains a gradient calibration during the learning process. A highlight of DGC is that it utilizes the whole historical information to obtain a more precise gradient estimation, and consequently relieves the negative impact of catastrophic forgetting. In \cref{subsec: naive approach}, we introduce ER and analyze the obstacle if we directly combine ER with SVRG. 
In \cref{subsec: SVRG estimator}, we present our DGC  method for addressing the issues discussed in \cref{subsec: naive approach}. In \cref{subsec: Combine DGC}, we explain how to integrate 
  DGC with other  CL  techniques.

\subsection{Experience Replay Revisited and SVRG}
\label{subsec: naive approach}
 First, we overview the classical ER \cite{ratcliff1990connectionist, chaudhry2019tiny} method as a baseline for the CIL setting. ER employs the reservoir sampling algorithm to dynamically manage a \textbf{buffer} (denoted as $\boldsymbol{\mathcal{M}_t}$) at time $t$, which serves to store historical data. At each time spot $t$ (and assume the current updating step number of the optimization is $k$), ER updates the model parameter  $\theta^{k}_t$ following the standard gradient descent method: 
\begin{align}
    \theta_t^{k+1} = \theta_t^{k} - \eta \cdot v_t^k, 
    \label{eq:update}
\end{align}
where $\eta$ is the learning rate and $v_t^k$ is the calculated gradient. If not using any replay strategy, $v_t^k$ is usually calculated on a randomly sampled training data $(x^k_t, y^k_t) \in  \mathcal{T}_t$. It is easy to see that this simple strategy can cause the forgetting issue for shifting data stream since it does not contain any information from the previous data. 
Hence the classical ER algorithm takes a random sample 
$(\bar{x}^k_t, \bar{y}^k_t)$ from the aforementioned buffer $\mathcal{M}_t$ (who contains a subset of historical data via reservoir sampling), and computes the gradient:
\begin{eqnarray}\label{eq:ER-graidient}
    v_t^k =\frac{1}{t}  \nabla_\theta \ell(x^k_t, y^k_t, \theta_t^k) + \frac{t-1}{t} \nabla_\theta \ell(\bar{x}^k_t, \bar{y}^k_t, \theta_t^k).
\end{eqnarray}
\begin{remark}
\label{rem-er}
\textbf{(1)} For simplicity, we assume that the data sets of all the tasks have the same size. So the obtained $v_t^k$ in (\ref{eq:ER-graidient}) is an unbiased estimation of the full gradient $\mathcal{G}(\bigcup_{c=1}^{t}\mathcal{T}_c,\theta)$ at the current time spot $t$. If they have different sizes, we can simply replace the coefficients ``$\frac{1}{t}$'' and ``$\frac{t-1}{t}$'' by ``$\frac{|\mathcal{T}_t|}{\sum^t_{c=1}|\mathcal{T}_c|}$'' and ``$\frac{\sum^{t-1}_{c=1}|\mathcal{T}_c|}{\sum^t_{c=1}|\mathcal{T}_c|}$'', respectively. \textbf{(2)} Also, we assume that the batch sizes of the random samples from $\mathcal{T}_t$ and $\mathcal{M}_t$ are both ``$1$'', i.e., we only take single item $(x^k_t, y^k_t)$ and $(\bar{x}^k_t, \bar{y}^k_t)$ from each of them. Actually, we can also take larger batch sizes and then the Eq (\ref{eq:ER-graidient}) can be modified correspondingly by taking their average gradients. 
\end{remark}



  Now we attempt to apply the SVRG method to Eq (\ref{eq:ER-graidient}).  
Our objective is to identify a more accurate unbiased estimate of the gradient at the current time spot $t$ so as to determine the updating direction.
At first glance, one possible solution is to adapt the streaming SVRG method \cite{frostig2015competing} to the CL scenario. 
We treat $\mathcal{M}_t$ as the static data set in our memory, and apply the 
 SVRG technique to calibrate the gradient ``$\nabla_\theta \ell(x^k_t, y^k_t, \theta_t^k) $'' and ``$ \nabla_\theta \ell(\bar{x}^k_t, \bar{y}^k_t, \theta_t^k)$'' in Eq (\ref{eq:ER-graidient}). 
 Similar to the procedure introduced in \cref{subsec:SVRG}, we denote the parameter at the beginning of the current stage $s$ as $\tilde{\theta}_{t,s}$. For simplicity, when we consider the update within stage $s$, we just use $\tilde{\theta}_{t}$ to denote $\tilde{\theta}_{t,s}$. Similar with Eq (\ref{for-svrgu}), we define the  terms
\begin{align}
\tilde{\mu} \triangleq  \mathcal{G}(\mathcal{M}_t, \tilde{\theta}_t), \quad \tilde{v} \triangleq \mathcal{G}(\mathcal{T}_t, \tilde{\theta}_t).
\end{align}
Then we can calibrate the gradient by $\nabla_\theta \ell({x}_t,{y}_t,\tilde{\theta}_t)$ and $\nabla_\theta \ell(\bar{x}_t, \bar{y}_t, \tilde{\theta_t})$ (which serves as the similar role of $\nabla_\theta \ell(x^{i}, y^{i}, \tilde{\theta}$ in (\ref{eq:SVRG})); the new form of $v_t^k $ becomes
\begin{align} \label{Eq: SSVRG}
v_t^k & =\frac{1}{t}   \left(\nabla_\theta \ell(x_t, y_t, \theta_t^k) - \nabla_\theta \ell({x}_t,{y}_t,\tilde{\theta}_t) + \tilde{v}  \right) + \nonumber\\ 
& \frac{t-1}{t} \left( \nabla_\theta \ell(\bar{x}_t, \bar{y}_t, \theta_t^k) -  \nabla_\theta \ell(\bar{x}_t, \bar{y}_t, \tilde{\theta_t}) + \tilde{\mu}\right). 
\end{align}

Obviously, if the buffer $\mathcal{M}_t$ contains the whole historical data (denote by $\mathcal{T}_{[1:t)} = \bigcup_{c=1}^{t-1}\mathcal{T}_c$), the above approach is exactly the standard SVRG. However, because $\mathcal{M}_t$ only takes a small subset of $\mathcal{T}_{[1:t)}$, this approach still cannot avoid information loss for the previous tasks. In next section, we propose a novel two-level dynamic algorithm to record more useful information from $\mathcal{T}_{[1:t)}$, and thereby reduce the information loss induced by $\mathcal{M}_t$. We also take the approach of Eq (\ref{Eq: SSVRG}) as a baseline in \cref{Sec: Experiment} to illustrate the advantage of our proposed approach. 

\subsection{Dynamic Gradient Calibration}
\label{subsec: SVRG estimator}
To tackle the issue discussed in \cref{subsec: naive approach},  we propose a novel two-level update approach ``Dynamic Gradient Calibration (DGC)'' to maintain our calibration term. Our focus is designing a method to incrementally update an unbiased estimation for $\mathcal{G}(\mathcal{T}_{[1:t)}, \theta_t^k)$. To illustrate our idea clearly, we decompose our analysis to two levels: \textbf{(1)} update the parameter during the training within each time spot $t$; \textbf{(2)} update the parameter at the transition from time spot $t$ to  $t+1$ (i.e., the moment that the task $\mathcal{T}_t$ has just been completed and the task $\mathcal{T}_{t+1}$ is just coming).



  \textbf{(1) How to update the parameter during the training within each time spot $t$.} We follow the setting of the streaming SVRG as discussed in \cref{subsec: naive approach}: 
  the training process at the current time spot $t$ is divided into a sequence of stages;  the model parameter at the beginning of each stage is recorded as $\tilde{\theta}_t$. 
  To illustrate our idea for calibrating the gradient $v^k_t$ in Eq (\ref{Eq: SSVRG}), we begin by considering an ``imaginary'' approach: 
  we let
\begin{align} \label{eq: final update}
    v_t^k & =\frac{1}{t}  \left( \nabla_\theta \ell(x_t, y_t, \theta_t^k) -\nabla_\theta \ell( {x}_t, {y}_t,\tilde{\theta}_t) + \mathcal{G}(\mathcal{T}_t,\tilde{\theta}_t)\right) \nonumber\\
    & + \frac{t-1}{t} \Gamma, 
\end{align}
 where 
\begin{align} \nonumber
     \Gamma= \nabla_\theta \ell(\bar{x}_t, \bar{y}_t, \theta_t^k)- \underbrace{  \left(\nabla_\theta \ell(\bar{x}_t, \bar{y}_t, \tilde{\theta}_t)  - \mathcal{G}(\mathcal{T}_{[1:t)}, \tilde{\theta}_t )\right). }_{\text{Calibration from the previous parameter $\tilde{\theta}_t$}} 
\end{align}
Different from Eq (\ref{Eq: SSVRG}), we compute $v^k_t$ based on the full historical data $\mathcal{T}_{[1:t)}$, which follows the same manner of SVRG. However, a major obstacle here is that we cannot obtain the exact $\Gamma$ since $\mathcal{T}_{[1:t)}$ is not available. This motivates us to design a relaxed form of (\ref{eq: final update}).
We define a surrogate function to approximate $\Gamma$, which can be computed through recursion. Suppose each training stage has $m\geq 1$ steps, then we define
\begin{align} 
 \nonumber \Gamma_{\text{DGC}}(t, k) & = \nabla_\theta \ell(\bar{x}_t, \bar{y}_t, \theta_t^k) - \\ 
 \label{eq: DGC inner update}  &\left(\nabla_\theta \ell(\bar{x}_t, \bar{y}_t, \tilde{\theta}_t)  - \Gamma'_{\text{DGC}}(t) \right), \\ 
 \label{eq: stage update} \Gamma'_{\text{DGC}}(t) & = \Gamma_{\text{DGC}}(t, m+1),
\end{align}
 in the stage. Note that the term ``$\nabla_\theta \ell(\bar{x}_t, \bar{y}_t, \tilde{\theta}_t)$'' in (\ref{eq: DGC inner update}) can be computed by the previous parameter $\tilde{\theta}_t$ during training, so we do not need to store it in buffer. For the initial  $t=1$ case (i.e., when we just encounter the first task),  we can directly set $\Gamma'_{\text{DGC}}(1)=\Vec{0}$. We update the function $\Gamma'_{\text{DGC}}(t)$ at the end of each training stage in (\ref{eq: stage update}), and use the function $\Gamma_{\text{DGC}}(t, k)$ to approximate $\Gamma$ in (\ref{eq: final update}). Comparing with the original formulation of $\Gamma$ in (\ref{eq: final update}), we only replace the term ``$\mathcal{G} (\mathcal{T}_{[1:t)}, \tilde{\theta}_t )$'' by ``$\Gamma'_{\text{DGC}}(t)$''. Also, we have the following lemma to support this replacement.  The detailed proof of lemma \ref{lem-gamma} is provided in our supplement.

\begin{lemma}
\label{lem-gamma}
\begin{align}
 \mathbb{E} \left[\Gamma'_{\text{DGC}}(t) \right] = \mathcal{G} (\mathcal{T}_{[1:t)}, \tilde{\theta}_t )
\end{align}
\end{lemma}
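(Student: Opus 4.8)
The plan is to prove the identity by induction on the training stages (with the time spot $t$ fixed), using two ingredients: the unbiasedness of a single buffer draw, which comes from reservoir sampling, and the telescoping cancellation that is deliberately built into the recursion \eqref{eq: DGC inner update}--\eqref{eq: stage update}. First I would record the probabilistic fact that underlies everything: because $\mathcal{M}_t$ is maintained by reservoir sampling, a uniformly drawn pair $(\bar{x}_t,\bar{y}_t)$ is a uniform sample of the historical data $\mathcal{T}_{[1:t)}$, so that for any parameter $\phi$ fixed before the draw,
\[
\mathbb{E}_{(\bar{x}_t,\bar{y}_t)\sim\mathcal{M}_t}\bigl[\nabla_\theta \ell(\bar{x}_t,\bar{y}_t,\phi)\bigr] = \mathcal{G}(\mathcal{T}_{[1:t)},\phi).
\]
This is the single place where the structure of the buffer enters, and it is exactly the analogue of the property $\mathbb{E}[\nabla_\theta \ell(x^i,y^i,\tilde{\theta})]=\tilde{\mu}$ that makes the SVRG correction unbiased in \cref{subsec:SVRG}.

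Second, I would carry out the induction. The base case is immediate: for $t=1$ we have $\Gamma'_{\text{DGC}}(1)=\vec{0}$ and $\mathcal{T}_{[1:1)}=\emptyset$, so both sides vanish. For the inductive step I would unfold \eqref{eq: stage update} at $k=m+1$; writing $\tilde{\theta}_t^{\text{old}}$ for the parameter at the head of the current stage (the $\tilde{\theta}_t$ appearing in \eqref{eq: DGC inner update}) and $\tilde{\theta}_t^{\text{new}}=\theta_t^{m+1}$ for the parameter at the end of the stage (which becomes the head of the next stage), the stage update reads
\[
\Gamma'_{\text{DGC}}(t)^{\text{new}} = \nabla_\theta \ell(\bar{x}_t,\bar{y}_t,\tilde{\theta}_t^{\text{new}}) - \nabla_\theta \ell(\bar{x}_t,\bar{y}_t,\tilde{\theta}_t^{\text{old}}) + \Gamma'_{\text{DGC}}(t)^{\text{old}}.
\]
Now I take expectations over all the randomness. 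Since the boundary draw $(\bar{x}_t,\bar{y}_t)$ is taken after, hence independently of, both snapshot parameters $\tilde{\theta}_t^{\text{old}}$ and $\tilde{\theta}_t^{\text{new}}$ (which are fixed by the preceding $m$ updates), the tower property together with the reservoir identity above turns the two gradient terms into $\mathbb{E}[\mathcal{G}(\mathcal{T}_{[1:t)},\tilde{\theta}_t^{\text{new}})]$ and $\mathbb{E}[\mathcal{G}(\mathcal{T}_{[1:t)},\tilde{\theta}_t^{\text{old}})]$. Invoking the inductive hypothesis $\mathbb{E}[\Gamma'_{\text{DGC}}(t)^{\text{old}}]=\mathbb{E}[\mathcal{G}(\mathcal{T}_{[1:t)},\tilde{\theta}_t^{\text{old}})]$, the two terms carrying $\tilde{\theta}_t^{\text{old}}$ cancel and what remains is exactly $\mathbb{E}[\mathcal{G}(\mathcal{T}_{[1:t)},\tilde{\theta}_t^{\text{new}})]$, which is the claim for the next stage and closes the induction.

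The part that needs the most care---and which I expect to be the main obstacle---is the measurability bookkeeping rather than the algebra. Since $\mathcal{G}(\mathcal{T}_{[1:t)},\cdot)$ is nonlinear in $\theta$, the identity cannot be read as an unconditional pathwise statement once $\tilde{\theta}_t$ is random, so the right-hand side $\mathcal{G}(\mathcal{T}_{[1:t)},\tilde{\theta}_t)$ must be understood in expectation (equivalently, conditioned on the stage head parameter); the essential step is to verify that the buffer draw used in each stage update is independent of the two parameters it is paired with, which is precisely what lets me pull the expectation inside and apply the reservoir identity with $\phi$ held fixed. A clean way to package all of this rigorously is to track the error $e:=\Gamma'_{\text{DGC}}(t)-\mathcal{G}(\mathcal{T}_{[1:t)},\tilde{\theta}_t)$ and observe that the recursion makes $\mathbb{E}[e^{\text{new}}\mid e^{\text{old}}]=e^{\text{old}}$, so $e$ is a martingale whose mean is preserved across stages and equals its value $0$ at the base case. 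Finally, to extend the statement from a single time spot to all $t$, I would rely on the level-(2) transition from $t$ to $t+1$ to re-initialize the calibrator consistently, which lets the same telescoping argument propagate the invariant across consecutive time spots with the $t=1$ case serving as the global base.
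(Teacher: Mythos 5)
Your proof is correct and follows essentially the same route as the paper's: induction with the base case $\Gamma'_{\text{DGC}}(1)=\vec{0}$, telescoping cancellation in the stage update driven by the unbiasedness of the buffer draw, and linearity of expectation for the $t\to t+1$ transition. The only difference is presentational---you make explicit the reservoir-sampling unbiasedness, the independence of the boundary draw from the snapshot parameters, and the fact that the identity must be read in expectation over the random $\tilde{\theta}_t$, all of which the paper's proof uses implicitly.
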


   We utilize the term ``$\nabla_\theta \ell(\bar{x}_t, \bar{y}_t, \tilde{\theta}_t)  - \Gamma'_{\text{DGC}}(t)$'' of (\ref{eq: DGC inner update}) as the calibrator for each updating step, thereby preserving the unbiased nature of the gradient estimator and reducing the variance of gradient estimation.

  \textbf{(2) How to update the parameter at the transition from  time spot $t$ to $t+1$.} 
 At the end of  time spot $t$, we update the recorded $\tilde{\theta}_t$ to $\theta_t^{m+1}$, and the data $\mathcal{T}_t$ from time $t$ should be integrated into the historical data. In this context, it is essential to update the calibrated gradient accordingly:
\begin{equation}
\begin{aligned} \label{eq: DGC outer update}
\Gamma'_{\text{DGC}}(t+1) = \frac{1}{t}\left((t-1) \cdot \Gamma'_{\text{DGC}}(t) +  \mathcal{G}(\mathcal{T}_t, \tilde{\theta}_t) \right) .
\end{aligned}
\end{equation}

 The complete algorithm is presented in Algorithm~\ref{alg}. Compared with the conventional reservoir sampling based approaches, we only require the additional   
  storage for keeping $\Gamma'_{\text{DGC}}(t)$, and so that the gradient $\Gamma_{\text{DGC}}(t, k)$ in (\ref{eq: DGC inner update}) can be effectively updated by the recursion. Moreover, our method can also conveniently adapt to TFCL where the task boundaries are not predetermined. In such a setting, we can simply treat each batch data (during the SGD) as a ``micro'' task at the time point and then update the gradient estimation via Eq (\ref{eq: DGC outer update}).
  The detailed algorithm for TFCL is placed in our appendix. 

Similar to the theoretical analysis of SVRG~\cite{johnson2013accelerating}, under mild assumptions, the optimization procedure of our DGC method at each time spot $t$ can also achieve linear convergence. We denote the optimal parameter at time spot $t$ as $\theta_{*} \triangleq \arg\min_{\theta} \ell_{\text{CL}}^{t}(\theta)$. Then we have the following theorem. 

\begin{theorem} \label{th:main} 
 Assume that $f(x;\theta)$ is  $L$-smooth and $\gamma$-strongly convex; the parameters 
$m \geq \frac{10L^2}{\gamma^2}$ and $\eta = \frac{\gamma}{10L}$. Then we have a linear convergence in expectation for the DGC procedure at time $t$:
$$\mathbb{E}\Big[\left\|\tilde{\theta}_{t,s+1}-\theta_{*}\right\|_{2}^{2}\Big] \leq \frac{1}{2^s}\mathbb{E}\Big[\left\|\tilde{\theta}_{t,1}-\theta_{*}\right\|_{2}^{2}\Big]$$
where $\tilde{\theta}_{t,s}$ represents the initialization parameter at the beginning of the $s$-th stage at time spot $t$.
\end{theorem}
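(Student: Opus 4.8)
The plan is to cast the within-time-spot optimization as a stochastic variance-reduced descent on the fixed objective $F(\theta) \triangleq \ell_{CL}^t(\theta)$ and to follow the template of \citet{johnson2013accelerating}. First I would record that, since each per-sample loss $\ell(x,y,\cdot)$ is $L$-smooth and $\gamma$-strongly convex, the average $F$ inherits these two properties, so that $F(\theta)-F(\theta_*)\ge \tfrac{\gamma}{2}\|\theta-\theta_*\|_2^2$ and the co-coercivity inequality $\|\nabla\ell(x,y,\theta)-\nabla\ell(x,y,\theta_*)\|_2^2\le 2L\big(\ell(x,y,\theta)-\ell(x,y,\theta_*)-\langle\nabla\ell(x,y,\theta_*),\theta-\theta_*\rangle\big)$ holds. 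The next step is to verify that the DGC direction $v_t^k$ defined through Eq~(\ref{eq: final update}) and Eq~(\ref{eq: DGC inner update}) is an unbiased estimator of $\nabla F(\theta_t^k)$: the task-$t$ term is unbiased for $\tfrac1t\mathcal{G}(\mathcal{T}_t,\theta_t^k)$ by the i.i.d. draw of $(x_t,y_t)$, while for the historical term I would invoke \cref{lem-gamma} together with the reservoir property of $\mathcal{M}_t$ to show its expectation equals $\tfrac{t-1}{t}\mathcal{G}(\mathcal{T}_{[1:t)},\theta_t^k)$. Summed, these give $\mathbb{E}[v_t^k]=\nabla F(\theta_t^k)$, so $v_t^k$ legitimately plays the role of the SVRG gradient surrogate.

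The heart of the argument is the second-moment bound. I would decompose $v_t^k = v^k_{\mathrm{ideal}} + \tfrac{t-1}{t}\,\delta$, where $v^k_{\mathrm{ideal}}$ is the idealized estimator that uses the exact calibration $\mathcal{G}(\mathcal{T}_{[1:t)},\tilde\theta_t)$ of Eq~(\ref{eq: final update}), and $\delta \triangleq \Gamma'_{\text{DGC}}(t)-\mathcal{G}(\mathcal{T}_{[1:t)},\tilde\theta_t)$ is the calibrator residual, which is zero-mean by \cref{lem-gamma} and is fixed throughout a stage. For $v^k_{\mathrm{ideal}}$ I would reproduce the standard SVRG bound, using co-coercivity to control the two ``difference'' terms and arriving at $\mathbb{E}\|v^k_{\mathrm{ideal}}\|_2^2 \le 4L\big(F(\theta_t^k)-F(\theta_*)\big)+4L\big(F(\tilde\theta_t)-F(\theta_*)\big)$; the residual contributes an additive $\tfrac{(t-1)^2}{t^2}\,\mathbb{E}\|\delta\|_2^2$ term (plus a cross term handled below). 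I would then run the one-step expansion $\|\theta_t^{k+1}-\theta_*\|_2^2 = \|\theta_t^{k}-\theta_*\|_2^2 - 2\eta\langle\theta_t^k-\theta_*,v_t^k\rangle + \eta^2\|v_t^k\|_2^2$, take expectations, use unbiasedness and strong convexity on the inner product, substitute the second-moment bound, and telescope over the $m$ inner steps of a stage. Using $\tilde\theta_{t,s+1}=\theta_t^{m+1}$ this collapses to a stage-to-stage contraction $\mathbb{E}\|\tilde\theta_{t,s+1}-\theta_*\|_2^2 \le \alpha\,\mathbb{E}\|\tilde\theta_{t,s}-\theta_*\|_2^2$ with $\alpha = \tfrac{1}{\gamma\eta(1-2L\eta)m} + \tfrac{2L\eta}{1-2L\eta}$, the Johnson--Zhang factor; plugging in $\eta=\tfrac{\gamma}{10L}$ and $m\ge \tfrac{10L^2}{\gamma^2}$ makes each summand at most $\tfrac14$, giving $\alpha\le\tfrac12$ and hence $\mathbb{E}\|\tilde\theta_{t,s+1}-\theta_*\|_2^2\le 2^{-s}\,\mathbb{E}\|\tilde\theta_{t,1}-\theta_*\|_2^2$ by induction on $s$.

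The main obstacle is the calibrator residual $\delta$, which is the one genuinely new feature relative to SVRG: because $\Gamma'_{\text{DGC}}(t)$ is only an unbiased \emph{estimate} (not the exact value) of $\mathcal{G}(\mathcal{T}_{[1:t)},\tilde\theta_t)$, it is correlated with the iterates $\theta_t^k$ generated within the stage, so the cross term $\mathbb{E}\langle\theta_t^k-\theta_*,\delta\rangle$ does not vanish termwise and must be tamed by Young's inequality, leaving a $\mathbb{E}\|\delta\|_2^2$ noise contribution. To keep the contraction clean I would bound $\mathbb{E}\|\delta\|_2^2$ recursively: rewriting the update Eq~(\ref{eq: stage update}) as $\Gamma'_{\text{DGC}}(t)$ plus a single-sample correction shows the residual evolves as a martingale whose increments are $O(L^2\|\tilde\theta_{t,s+1}-\tilde\theta_{t,s}\|_2^2)$, so its size is governed by the very iterate gaps that the contraction forces to shrink. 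The delicate point is to combine this residual recursion with the iterate recursion in a single Lyapunov/potential argument (or, under the ``mild assumptions'' alluded to in the text, to treat $\Gamma'_{\text{DGC}}(t)$ at its conditional mean via \cref{lem-gamma}, which reduces the whole computation to the exact-control-variate SVRG analysis); this coupling is where the real work lies, whereas the smoothness and strong-convexity manipulations are routine.
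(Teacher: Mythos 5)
Your plan assembles the right ingredients and is, in one respect, more careful than the paper: you isolate the calibrator residual $\delta=\Gamma'_{\text{DGC}}(t)-\mathcal{G}(\mathcal{T}_{[1:t)},\tilde\theta_t)$ as the genuinely new object relative to SVRG. The paper does not carry out your Lyapunov/martingale coupling; it takes exactly your ``fallback'': inside the second-moment bound it replaces $\tilde\mu$ by its conditional mean $\nabla\ell_{\text{CL}}^t(\tilde\theta_t)$ (justified by \cref{lem-gamma}) and then applies $\mathbb{E}\|\xi-\mathbb{E}\xi\|_2^2\le\mathbb{E}\|\xi\|_2^2$ to the centered term, so the whole computation reduces to the exact-control-variate case. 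Beyond that, the technical route differs: instead of co-coercivity and function-value gaps, the paper bounds $\mathbb{E}\|v_t^k\|_2^2\le 2L\big(\mathbb{E}\|\theta_t^k-\theta_*\|_2^2+\mathbb{E}\|\tilde\theta_t-\theta_*\|_2^2\big)$ directly from $L$-smoothness of the gradients, combines this with $\gamma$-strong convexity of the inner-product term to obtain the per-step recursion $\mathbb{E}\|\theta_t^k-\theta_*\|_2^2\le(1-2\eta\gamma+2L\eta^2)\,\mathbb{E}\|\theta_t^{k-1}-\theta_*\|_2^2+2L\eta^2\,\mathbb{E}\|\tilde\theta_t-\theta_*\|_2^2$, and compounds it $m$ times to get a factor $(1-\Theta(\gamma^2/L^2))^m\le e^{-1}\le 1/2$ per stage. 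What the distance-based route buys is a statement about the \emph{last} inner iterate, which is what the theorem actually needs.

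That last point is where your plan has a concrete gap. The telescoping of \citet{johnson2013accelerating} with the factor $\alpha=\frac{1}{\gamma\eta(1-2L\eta)m}+\frac{2L\eta}{1-2L\eta}$ is an ``Option II'' argument: summing the one-step inequalities over $k$ controls $\frac{1}{m}\sum_k\mathbb{E}[F(\theta_t^k)-F(\theta_*)]$, i.e., the value at a uniformly sampled or averaged inner iterate. It does not ``collapse'' to a bound on $\|\theta_t^{m+1}-\theta_*\|_2^2$, yet Algorithm 1 and \cref{th:main} define $\tilde{\theta}_{t,s+1}$ as the last iterate $\theta_t^{m+1}$. To conclude for the last iterate you need the per-iterate distance contraction (as the paper does), or you must modify the stage update to output a random/averaged iterate. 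Separately, your claim that $\eta=\frac{\gamma}{10L}$ and $m\ge\frac{10L^2}{\gamma^2}$ make each summand of $\alpha$ at most $\frac14$ does not follow as stated: $\gamma\eta m= L$, so the first summand is of order $1/L$, and $2L\eta=\gamma/5$, neither of which is below $\frac14$ without an implicit normalization of $L$ and $\gamma$. The paper's own constants exhibit a similar dimensional looseness, but your route inherits the problem rather than avoiding it, so this step needs to be reworked in either case.
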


  The  proof of Theorem~\ref{th:main} is provided in appendix. This theorem indicates that the gradient calibrated by our DGC method shares the similar advantages with SVRG. For instance, when updating each task $\mathcal{T}_t$, the loss function has a smoother decrease (we validate this property in \cref{subsec:loss trajectories}).

\begin{algorithm}[h t]
    \caption{DGC   procedure}
    \label{alg}
    \begin{algorithmic}[1]
        \STATE {\bfseries Input:} Data stream $\{\mathcal{T}_1, \mathcal{T}_2, \cdots\, \mathcal{T}_T\}$, update steps $m$, update stages $S$, batch size $b$, and learning rate $\eta$.
        \renewcommand{\algorithmiccomment}[1]{{\color{blue}\textit{/* #1 */}}}
        \STATE {\bfseries Output:} Trained model parameter $\tilde{\theta}_T$

        \STATE Initialize model parameters $\tilde{\theta}_0$, $\Gamma^\prime_{\text{DGC}}(1) = \Vec{0}$
        \STATE Initialize buffer $\mathcal{M}_1 = \emptyset$
        \FOR{$t = {1,2,\ldots, T}$}
            \STATE $\tilde{\theta}_t \gets \tilde{\theta}_{t-1}$
            \FOR{$s = {1,2,\ldots, S}$}
            \STATE $\theta_t^1 \gets \tilde{\theta}_t$
                \FOR{$k = {1,2,\ldots, m}$}
                \STATE Take a uniform sample $X_t$ of size $b$ from $\mathcal{T}_t$ 
                \STATE Take a uniform sample $X'_t$ of size $b$ from $\mathcal{M}_t$
                \STATE Calculate $\Gamma_{\text{DGC}}(t,k)$ with $\Gamma^\prime_{\text{DGC}}(t)$ according to   (\ref{eq: DGC inner update})
                
                \renewcommand{\algorithmiccomment}[1]{{\color{blue}\textit{/* #1 */}}}
                \STATE Calculate $v_t^k$ with $\Gamma_{\text{DGC}}(t,k)$  according to (\ref{eq: final update})
                
                \COMMENT{Calculate the calibrated gradient}
                \STATE $\theta_t^{k+1} \gets \theta_t^k - \eta \cdot v_t^k$
                \ENDFOR 
            \renewcommand{\algorithmiccomment}[1]{{\color{blue}\textit{/* #1 */}}}
            \STATE Update $\Gamma^\prime_{\text{DGC}}(t)$ according to  (\ref{eq: DGC inner update}) and (\ref{eq: stage update})
            
            \COMMENT{Update $\Gamma^\prime_{\text{DGC}}(t)$ from $\Gamma_{\text{DGC}}(t, m+1)$}
            \STATE $\tilde{\theta}_t \gets \theta_t^{m+1}$
            \ENDFOR
            \STATE $\mathcal{M}_{t+1} \gets$ MemoryUpdate($\mathcal{T}_t$, $\mathcal{M}_t$)
            \renewcommand{\algorithmiccomment}[1]{{\color{blue}\textit{/* #1 */}}}
            
            \COMMENT{Reservoir sampling}

            \STATE Calculate and store $\Gamma^\prime_{\text{DGC}}(t+1)$ according to (\ref{eq: DGC outer update})
            
            \COMMENT{Update $\Gamma^\prime_{\text{DGC}}(t+1)$ from $\Gamma^\prime_{\text{DGC}}(t)$ }
        \ENDFOR
    \end{algorithmic}
\end{algorithm}

\subsection{Combine DGC with Other CL Methods}
\label{subsec: Combine DGC}
Our proposed DGC approach can be also efficiently combined with other CL methods. 
As discussed in Section~\ref{sec-introduction}, a number of popular CL methods rely on the reservoir sampling technique to preserve historical data in   buffer~\cite{buzzega2020dark, boschini2022class, DBLP:conf/iclr/RiemerCALRTT19}. 
For these methods, such as DER and XDER, we can conveniently combine the conventional batch gradient descent with 
the DGC calibrated gradient estimator $\Gamma_{\text{DGC}}(t,k)$ defined in \cref{subsec: SVRG estimator}, so as to obtain a more precise gradient estimator with reduced variance based on Eq (\ref{eq: final update}):
\begin{eqnarray} \label{eq: combine}
    v_t^k &=\frac{1}{t}   \left(\nabla_\theta \ell(x_t, y_t, \theta_t^k) \!\!- \!\!\nabla_\theta(\bar{x}_t,\bar{y}_t,\tilde{\theta}_t) + \mathcal{G}(\mathcal{T}_t,\tilde{\theta}_t)  \right)\!\!+ \nonumber\\
    &  \frac{t-1}{t}\bigg[ \alpha \Gamma_{\text{DGC}}(t,k)\!\! +\!\! (1-\alpha)\nabla_\theta \ell(\bar{x}_t, \bar{y}_t, \theta^k_t)\bigg],
\end{eqnarray}
where $\alpha$ is a given parameter to control the proportion of the two unbiased estimations $\Gamma_{\text{DGC}}(t,k)$ and $\nabla_\theta \ell(\bar{x}_t, \bar{y}_t, \theta^k_t)$ of the gradient $\mathcal{G}(\mathcal{T}_{[1:t)},\theta_{t}^k)$. 
According to the theoretical analysis in SSVRG \cite{frostig2015competing}, 
the selection of $\alpha$ should be related to $1/L$, where $L$ is the smoothness coefficient of the model $f(x;\theta)$, i.e., 
\begin{equation}
    L = \max_{\theta_1,\theta_2 \in \Theta} \frac{|\nabla_{\theta} f(x;\theta_1) - \nabla_{\theta} f(x;\theta_2)|}{|\theta_1-\theta_2|},
\end{equation}
where $\Theta$ represents the parameter space. The experimental study on the impact of $\alpha$ is placed in our supplement. In \cref{Sec: Experiment}, we show that the amalgamation of the CL method and our DGC calibration procedure can yield a more precise update direction, and consequently enhance the ultimate model performance. 



\begin{table*}[htbp]
\centering
\begin{tabular}{ccccccc}
\toprule[2pt]
\textbf{Datasets} &
\multicolumn{2}{c}{\textbf{S-CIFAR10}} &
\multicolumn{2}{c}{\textbf{S-CIFAR100}} & 
\multicolumn{2}{c}{\textbf{S-TinyImageNet}}\\
Size of Buffer & $500$ & $2000$ & $500$ & $2000$ & $2000$ & $5000$ \\
\hline
\textsc{AOP} & \multicolumn{2}{c}{$66.73_{\pm0.60}$
} & \multicolumn{2}{c}{$42.73_{\pm0.62}$} & \multicolumn{2}{c}{$21.40_{\pm
0.17}$}\\
\textsc{AGEM} & $45.42_{\pm0.81}$ & $45.58_{\pm0.64}$ & $26.11_{\pm0.09}$ & $26.13_{\pm0.07}$ & $22.41_{\pm0.11}$ & $21.98_{\pm0.36}$\\
\textsc{SSVRG} & $49.02_{\pm4.09}$ & $58.68_{\pm3.77}$ & $27.74_{\pm2.45}$ & $39.09_{\pm5.53}$ & $14.39_{\pm1.47}$ & $15.88_{\pm2.37}$\\
\textsc{MOCA} & $81.01_{\pm0.97}$ & $85.06_{\pm0.51}$ & $54.14_{\pm0.43}$ & $59.29_{\pm2.97}$ & $34.74_{\pm10.08}$ & $38.86_{\pm8.58}$ \\
\textsc{GSS} & $68.81_{\pm0.98}$ & $76.08_{\pm1.35}$ & $33.72_{\pm0.22}$ & $38.54_{\pm0.39}$ & $31.38_{\pm0.11}$ & $34.31_{\pm0.22}$\\
\textsc{GCR} & $\underline{82.31}_{\pm0.43}$ & $86.35_{\pm0.48}$ & $53.43_{\pm2.15}$ & $63.18_{\pm2.26}$ & $48.94_{\pm0.44}$ & $54.60_{\pm0.43}$\\
\textsc{HAL} & $58.06_{\pm1.90}$ & $69.53_{\pm2.55}$ & $24.85_{\pm0.91}$ & $28.05_{\pm1.90}$ & $18.66_{\pm1.02}$ & $21.45_{\pm0.91}$\\
\textsc{ICARL} & $65.89_{\pm2.74}$ & $75.94_{\pm0.84}$ & $60.58_{\pm0.50}$ & $64.03_{\pm0.41}$ & $43.53_{\pm0.21}$ & $44.52_{\pm0.31}$\\
\hline
\textsc{ER} & $74.19_{\pm0.85}$ & $84.27_{\pm0.57}$ & $42.34_{\pm0.83}$ & $55.48_{\pm1.52}$ & $39.23_{\pm0.16}$ & $45.47_{\pm0.44}$\\
\rowcolor{gray!40} \textsc{DGC-ER} & $76.09_{\pm0.62}$ & $\underline{86.42}_{\pm0.58}$ & $44.46_{\pm1.07}$ & $59.55_{\pm0.97}$ & $41.38_{\pm0.52}$ & $47.40_{\pm0.45}$\\
\hline
\textsc{DER++} & $59.66_{\pm1.32}$ & $66.81_{\pm0.19}$ & $47.03_{\pm0.55}$ & $55.22_{\pm0.54}$ & $32.20_{\pm0.75}$ & $40.89_{\pm0.37}$\\
\rowcolor{gray!40} \textsc{DGC-DER++} & $62.92_{\pm0.90}$ & $67.43_{\pm0.25}$ & $49.59_{\pm1.06}$ & $57.05_{\pm0.67}$ & $33.67_{\pm0.73}$ & $41.76_{\pm0.53}$\\
\hline
\textsc{XDER} & $70.12_{\pm0.68}$ & $70.35_{\pm0.63}$ & $61.45_{\pm0.50}$ & $66.51_{\pm0.42}$ & $52.45_{\pm0.92}$ & $55.12_{\pm0.22}$\\
\rowcolor{gray!40} \textsc{DGC-XDER} & $72.34_{\pm1.08}$ & $72.41_{\pm1.05}$ & $\underline{62.70}_{\pm0.44}$ & $\underline{67.59}_{\pm0.18}$ & $\underline{53.50}_{\pm0.25}$ & $\underline{55.94}_{\pm0.19}$\\
\hline
\textsc{Dynamic ER} & $79.65_{\pm0.86}$ & $83.30_{\pm0.93}$ & $61.92_{\pm2.75}$ & $64.57_{\pm2.02}$ & $54.88_{\pm1.64}$ & $56.70_{\pm0.73}$\\
\rowcolor{gray!40} \textsc{DGC-Dynamic ER} & $\mathbf{84.23}_{\pm1.62}$ & $\mathbf{89.90}_{\pm0.93}$ & $\mathbf{63.33}_{\pm1.26}$ & $\mathbf{70.70}_{\pm1.31}$ & $\mathbf{58.10}_{\pm1.06}$ & $\mathbf{58.23}_{\pm0.84}$\\
\toprule[2pt]
\end{tabular}
\caption{The FAIA $\pm$ standard error(\%) in CIL. The methods combined with DGC are colored in gray. The best results are highlighted in bold, and the best results except Dynamic ER and DGC-Dynamic ER are underlined. Since  AOP  does not store previous data during training, it only has one numerical result per dataset in the table (without specifying the buffer size).}
\label{tab:main}
\end{table*}

\begin{table}[t]
\centering
\begin{tabular}{ccc} 
\toprule[2pt]
\textbf{Datasets} &
\multicolumn{2}{c}{\textbf{S-CIFAR100}} \\
 & $5$ Tasks & $20$ Tasks\\
\hline
\textsc{AOP} & $43.31_{\pm0.44}$ & $40.99_{\pm0.36}$\\
\textsc{AGEM} & $38.67_{\pm0.14}$ & $16.21_{\pm0.06}$\\
\textsc{SSVRG} & $46.00_{\pm4.60}$ & $33.22_{\pm4.10}$\\
\textsc{MOCA} & $66.58_{\pm0.14}$ & $22.05_{\pm2.56}$\\
\textsc{GSS} & $50.43_{\pm0.43}$ & $25.93_{\pm0.11}$\\
\textsc{GCR} & $67.20_{\pm0.38}$ & $51.57_{\pm1.63}$\\
\textsc{HAL} & $35.05_{\pm0.68}$ & $27.16_{\pm1.51}$\\
\textsc{ICARL} & $67.45_{\pm0.27}$ & $55.77_{\pm0.57}$\\
\hline
\textsc{ER} & $60.85_{\pm0.60}$ & $52.16_{\pm0.90}$\\
\rowcolor{gray!40} \textsc{DGC-ER} & $62.13_{\pm0.33}$ & $54.86_{\pm1.22}$\\
\hline
\textsc{DER++} & $56.27_{\pm0.27}$ & $53.77_{\pm1.56}$\\
\rowcolor{gray!40} \textsc{DGC-DER++} & $57.45_{\pm1.04}$ & $57.71_{\pm0.35}$\\
\hline
\textsc{XDER} & $67.14_{\pm0.38}$ & $60.45_{\pm0.46}$\\
\rowcolor{gray!40} \textsc{DGC-XDER} & $\mathbf{67.56}_{\pm0.79}$ & $\mathbf{63.53}_{\pm0.48}$\\
\toprule[2pt]
\end{tabular}

\caption{The FAIA $\pm$ standard error(\%) with different number of tasks. DGC methods are colored in gray. The best results are highlighted in bold.}
\label{tab:different Tasks}
\vskip -0.2in
\end{table}

\section{Experiments}
\label{Sec: Experiment}

We conduct the experiments to compare with
various baseline methods across different datasets. We consider both the CIL and TFCL models. 




  \textbf{Datasets} We carry out the experiments on three widely employed datasets S(Split)-CIFAR10, S-CIFAR100 \cite{krizhevsky2009learning}, and S-TinyImageNet \cite{le2015tiny}. S-CIFAR10 is the split dataset by 
partitioning CIFAR10 into $5$ tasks, each containing two categories; similarly, S-CIFAR100 and S-TinyImageNet are the datasets by respectively partitioning 
 CIFAR100 and TinyImageNet  into $10$ tasks, each containing $10$ (S-CIFAR100) and   $20$ (S-TinyImageNet) categories. 

  \textbf{Baseline methods} We consider the following baselines. \textbf{(1) Replay-based methods:} ER~\cite{chaudhry2019tiny}, DER++
  \cite{buzzega2020dark}, XDER \cite{boschini2022class}, MOCA \cite{DBLP:journals/tmlr/YuHHLWL23}, GSS \cite{aljundi2019gradient}, GCR \cite{tiwari2022gcr}, HAL \cite{chaudhry2021using}, and ICARL \cite{rebuffi2017icarl}. \textbf{(2) Optimization-based methods:} AGEM \cite{DBLP:conf/iclr/ChaudhryRRE19}, AOP \cite{guo2022adaptive}, and SSVRG \cite{frostig2015competing}.
  \textbf{(3) Dynamic architecture method:} Dynamic ER \cite{yan2021dynamically}.  We integrate DGC with ER, DER++, XDER, and Dynamic ER, and assess their performances in CIL. For  TFCL, we consider its combination with ER and DER++.
For convenience, we use ``DGC-Y'' to denote the combination of  
 DGC  with a CL method ``Y''. For example, DGC-ER denotes the method combining ER and DGC methods.


  \textbf{Evaluation metrics}
We employ the {\em Average Accuracy (AA)} \cite{chaudhry2018riemannian, mirzadeh2020linear} and the {\em final Average Incremental Accuracy (FAIA)} \cite{hou2019learning, douillard2020podnet} to assess the performance. These two metrics are both widely used for continual learning. Let $a_{k,j} \in [0,1] (k\ge j)$ denote the classification accuracy evaluated on the testing set of the task $\mathcal{T}_j$ after learning  $\mathcal{T}_k$. The value AA at time spot $i$ is defined as
    $\text{AA}_i \triangleq \frac{1}{i}\sum_{j=1}^{i} a_{i,j}$. 
In particular, we  name the value $\text{AA}_T$ as the {\em final Average Accuracy (FAA)}.
The final AIA is defined as 
    $\text{FAIA} \triangleq \frac{1}{T} \sum_{i=1}^{T} \text{AA}_i$.
We also use {\em Final Forgetting (FF)} from~\cite{chaudhry2018riemannian} to measure the forgetting of the model throughout the learning process
(the formal definition of FF and its numerical results are placed in the supplement).
Each instance of our experiments is repeated by 10 times.


\begin{figure*}[t]
\centering
\includegraphics[width=0.95\linewidth, height=6.5cm]{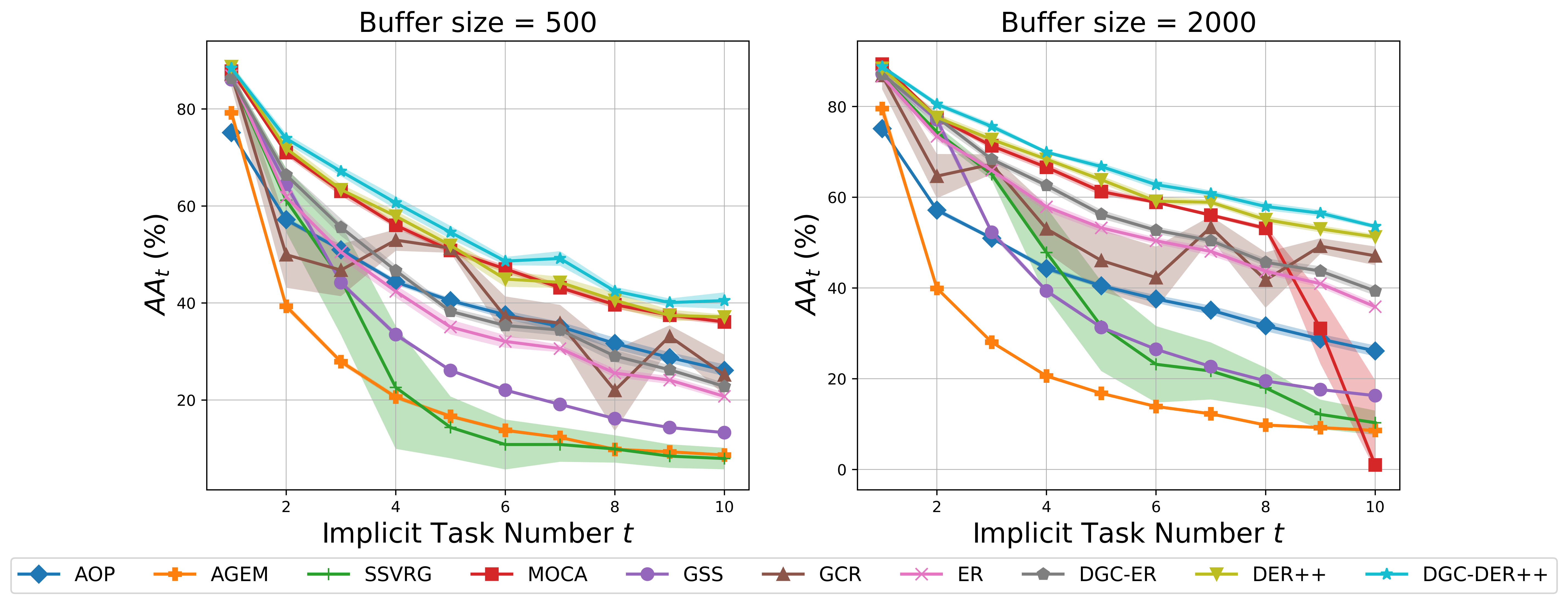} 

\caption{The averaged $AA_t$ over implicit task number $t$ in TFCL on S-CIFAR100.}
\label{fig:taskfree}  

\end{figure*}

\subsection{Results in CIL}
\label{subsec:CIL results}

{  \textbf{Hyper-parameters selection} In our implementation, we fixed the values of epoch and batch size, which implies that the total number of optimization steps (i.e., the value $s\times m$) is also fixed. In our experiments, we set the value $m=200$, so the value of $s$ in Algorithm~\ref{alg} is also determined.}

 Except for the Dynamic ER (which is dynamically expanded), all other testing methods have constant storage limits. The results shown in Table~\ref{tab:main} reveal that our DGC method can bring improvements to the combined methods on the testing benchmarks.
 For the sake of clarity, we also underline the best-performing method except Dynamic ER and DGC-Dynamic ER in Table~\ref{tab:main}.
Among the methods with constant storage limits, DGC-ER achieves the best results on S-CIFAR10 when the buffer size is $2000$, and DGC-XDER achieves the best results on S-CIFAR100 and S-TinyImageNet.
It is worth noting that our DGC method can also be conveniently integrated with existing advanced dynamic expansion representation techniques, such as Dynamic ER,  
which demonstrates the improvements  to certain extent, e.g.,  it achieves an improvement more than $6\%$ on S-CIFAR10/100 with buffer size $2000$. {  We also record the training time of these baseline methods in our supplement.}


  In the subsequent experiment, we investigate the performance of DGC compared to other baseline methods with varying the number of tasks. Throughout the experiment, we maintain a constant buffer size of $2000$. As outlined in Table~\ref{tab:different Tasks}, our results demonstrate that DGC can bring certain improvements to ER, DER++, and XDER with setting the number of tasks to be $5$ and $20$.
Moreover, similar to the previous research in~\cite{boschini2022class},   the results shown in Table \ref{tab:different Tasks} also imply that increasing the number of tasks could exacerbate the catastrophic forgetting issue. This phenomenon occurs because the model faces a reduced volume of data on each specific task, and thereby necessitates the capability of retaining the information of historical data to guide the model updates. 
As can be seen, the improvement obtained by DGC for the case of $20$ tasks usually is more significant compared with the case of $5$ tasks. For example, DGC-DER++ achieves a $3.94\%$ improvement with $20$ tasks versus a $1.18\%$ improvement with $5$ tasks, while DGC-XDER exhibits a $3.08\%$ improvement with $20$ tasks and only $0.42\%$ with $5$ tasks. These results highlight the advantage of DGC for mitigating catastrophic forgetting by effectively utilizing historical data through gradient-based calibration.

\subsection{Results in TFCL}
\label{subsec:task-free results}

We then conduct the experiments in TFCL. The curves shown in Figure \ref{fig:taskfree} depict the average $AA_t$ evolutions with varying the ``implicit'' task number $t$ on S-CIFAR100. We call it ``implicit'' since the value $t$ is not given during the training, which means that the design of the algorithm cannot rely on task boundaries or task identities. Therefore, we only compare those baseline methods that are applicable to TFCL model. The results suggest that DGC can bring improvements to ER and DER++ on $AA_t$ for almost all the $t$s; in particular, DGC-DER++ achieves the best performance and also with small variances. 
Through approximating the full gradient, the GCR and SSVRG methods can relieve the catastrophic forgetting issue to a certain extent, but they still suffer from the issue of storage limit, which affects their effectiveness for estimating the full gradient. Consequently, these methods exhibit performance downgrade and larger variance in Figure \ref{fig:taskfree}. Comparing with them, the approaches integrated with DGC illustrate more consistent and stable improvements.
More detailed results for TFCL are available in our supplement.

\subsection{Smoothness of Training with DGC}
\label{subsec:loss trajectories}

  Since our proposed DGC approach stems from the variance reduction method of SGD, a natural question is whether it also improves the smoothness during the training process.
  We  compare the loss trajectories on the entire training dataset before and after implementing DGC.  From Figure~\ref{fig:loss} we can see that the classical ER method has erratic fluctuations in loss during the training process; this could cause some practical problems in a real-world CL scenario, e.g., 
  we may need to pay more effort to carefully adjust the learning rate and determine the stopping condition.
  In contrast, the  DGC method has a smoother reduction in loss, and ultimately yields lower loss values.
  \begin{figure}[t]
\centering
\includegraphics[width=0.8\linewidth]{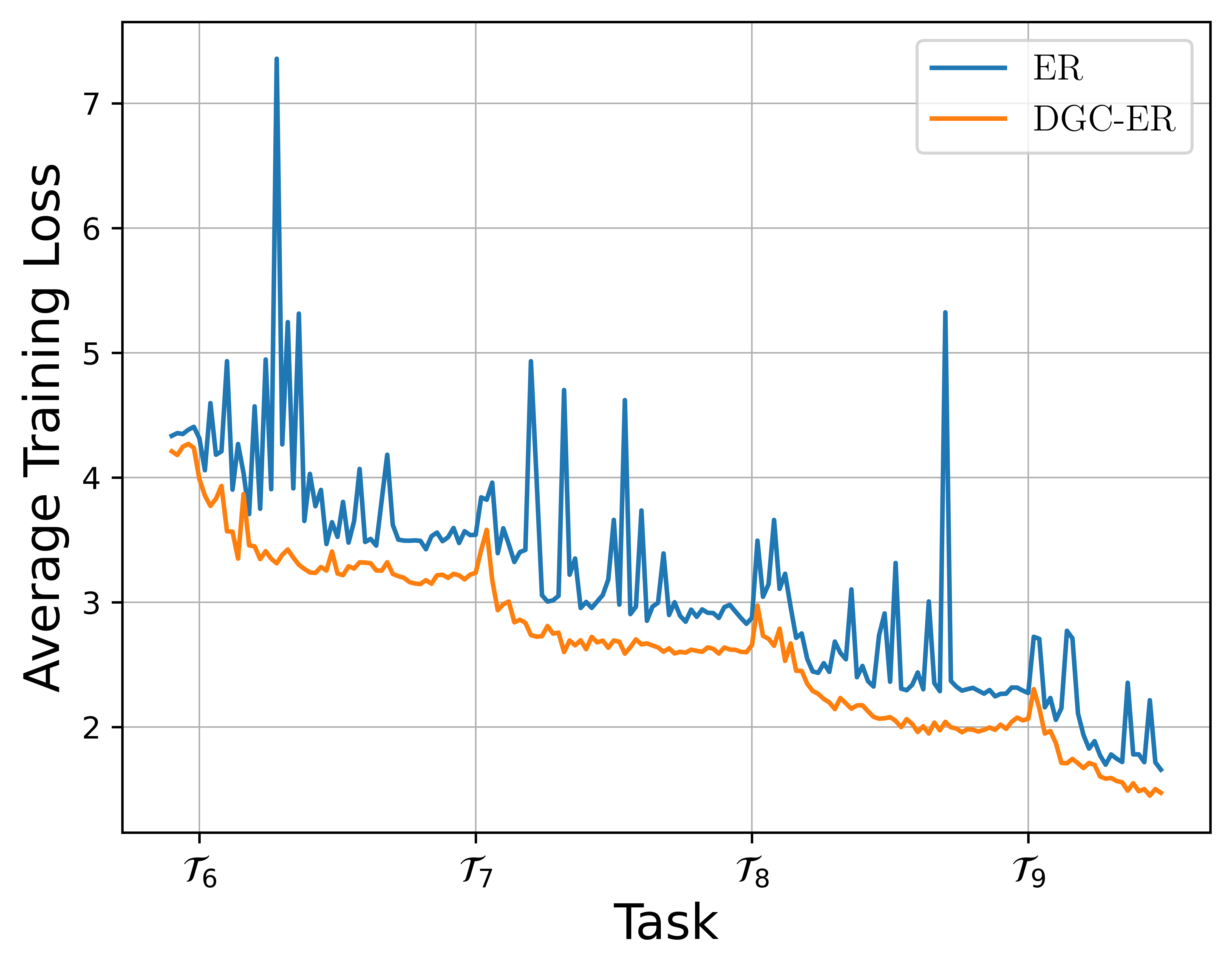} 

\caption{The partial trajectories of loss on S-CIFAR100.} 
\label{fig:loss}  
\vskip -0.2in
\end{figure}

\section{Conclusion}

In this paper, we revisit the experience replay method and aim to utilize historical information to derive a more accurate gradient for alleviating catastrophic forgetting. Inspired by the variance reduction methods for SGD, we introduce a new approach ``DGC'' to dynamically manage a gradient calibration term in CL training. Our approach can be conveniently integrated with several existing continual learning methods, contributing to a substantial improvement in both CIL and TFCL. Moreover, the improved stability of training loss reduction can also ease our practical implementation. 

 \section*{Acknowledgements}
We want to thank the anonymous reviewers and Xianglu Wang for their helpful comments. The research of this work was supported in part by the National Key Research and Development Program of China 2021YFA1000900, the National Natural Science Foundation of China 62272432, and the Natural Science Foundation of Anhui Province 2208085MF163. 
 

\section*{Impact Statement}
This paper presents work whose goal is to advance the field of Machine Learning. There are many potential societal consequences of our work, none which we feel must be specifically highlighted here.

\bibliography{example_paper}
\bibliographystyle{icml2024}

\newpage
\appendix
\onecolumn

\section{Omitted Proofs}
\label{sec:appendix_proof}

\begin{lemma}[Lemma 3.2]
\label{Appendix: gamma}
\begin{align}
 \mathbb{E} \left[\Gamma'_{\text{DGC}}(t) \right] = \mathcal{G} (\mathcal{T}_{[1:t)}, \tilde{\theta}_t ).
\end{align}
\end{lemma}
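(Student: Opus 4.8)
The plan is to prove the identity by induction, tracking $\Gamma'_{\text{DGC}}(t)$ through the two kinds of updates it undergoes: the stage-to-stage update within a fixed time spot $t$ (Eqs.~(\ref{eq: DGC inner update})--(\ref{eq: stage update})), and the time-spot transition from $t$ to $t+1$ (Eq.~(\ref{eq: DGC outer update})). Concretely, I would establish the stronger invariant that at the beginning of every stage $s$ of every time spot $t$, with $\tilde{\theta}_t$ the corresponding stage-initialization parameter, one has $\mathbb{E}[\Gamma'_{\text{DGC}}(t)] = \mathcal{G}(\mathcal{T}_{[1:t)}, \tilde{\theta}_t)$. The base case $t=1$ is immediate, since $\Gamma'_{\text{DGC}}(1) = \vec{0}$ and $\mathcal{T}_{[1:1)} = \emptyset$, so both sides vanish.

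The fact I would isolate first is the unbiasedness of buffer sampling: because $\mathcal{M}_t$ is maintained by reservoir sampling, a uniform draw $(\bar{x}_t, \bar{y}_t)$ from $\mathcal{M}_t$ is marginally a uniform draw from the full history $\mathcal{T}_{[1:t)}$, so that for any parameter $\theta$ held fixed relative to this draw, $\mathbb{E}[\nabla_\theta \ell(\bar{x}_t, \bar{y}_t, \theta)] = \mathcal{G}(\mathcal{T}_{[1:t)}, \theta)$. For the stage update I would substitute $k = m+1$ into (\ref{eq: DGC inner update}) to get $\Gamma'_{\text{DGC}}(t)_{\text{new}} = \nabla_\theta\ell(\bar{x}_t, \bar{y}_t, \theta_t^{m+1}) - \nabla_\theta\ell(\bar{x}_t, \bar{y}_t, \tilde{\theta}_t) + \Gamma'_{\text{DGC}}(t)_{\text{old}}$, where $\theta_t^{m+1} = \tilde{\theta}_{t,s+1}$ is the next stage's initialization. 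Applying the buffer-unbiasedness fact to both gradient terms, the full gradient at $\tilde{\theta}_t$ cancels against the inductive value $\mathbb{E}[\Gamma'_{\text{DGC}}(t)_{\text{old}}] = \mathcal{G}(\mathcal{T}_{[1:t)}, \tilde{\theta}_t)$, leaving $\mathbb{E}[\Gamma'_{\text{DGC}}(t)_{\text{new}}] = \mathcal{G}(\mathcal{T}_{[1:t)}, \theta_t^{m+1})$, which is exactly the invariant for stage $s+1$.

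For the time-spot transition I would invoke the equal-task-size convention of Remark~\ref{rem-er} to decompose $\mathcal{G}(\mathcal{T}_{[1:t+1)}, \theta) = \tfrac{t-1}{t}\mathcal{G}(\mathcal{T}_{[1:t)}, \theta) + \tfrac{1}{t}\mathcal{G}(\mathcal{T}_t, \theta)$. Taking expectations in (\ref{eq: DGC outer update}), the inductive hypothesis for $\Gamma'_{\text{DGC}}(t)$ combined with the exactly computed term $\mathcal{G}(\mathcal{T}_t, \tilde{\theta}_t)$ (available because $\mathcal{T}_t$ is still in hand at the transition) produces precisely this decomposition, giving the invariant at $t+1$; here I would note that $\tilde{\theta}_{t+1} = \tilde{\theta}_t$ at the transition, so the reference parameter is unchanged and the algebra is a clean weighted telescoping.

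The main obstacle is the correctness of the conditioning in the stage-update step, rather than the telescoping algebra, which is routine. The next-stage parameter $\theta_t^{m+1}$ is produced by SGD steps that themselves sample from $\mathcal{M}_t$, so $\theta_t^{m+1}$ is statistically correlated with the buffer contents, whereas the buffer-unbiasedness fact strictly applies only to parameters independent of the draw. I would handle this by drawing the sample used to refresh $\Gamma'_{\text{DGC}}(t)$ independently of the within-stage SGD samples, conditioning on the $\sigma$-algebra generated by the parameter trajectory so that $\theta_t^{m+1}$ and $\tilde{\theta}_t$ are treated as fixed arguments (consistent with how they appear on the right-hand side of the statement), and performing the cancellation at the level of this conditional expectation before taking the outer expectation. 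Making this layering of expectations precise — and reconciling it with the reuse of a single buffer $\mathcal{M}_t$ across all stages of time spot $t$ — is the delicate part; the remaining steps are direct substitution and linearity of expectation.
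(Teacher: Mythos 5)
Your proposal follows essentially the same route as the paper's proof: induction on $t$ with base case $\Gamma'_{\text{DGC}}(1)=\vec{0}$, unbiasedness of the reservoir-sampled draw to push the expectation through the end-of-stage update (\ref{eq: stage update}), and the weighted telescoping of (\ref{eq: DGC outer update}) at the task transition. You are in fact more explicit than the paper about the conditioning needed to treat $\theta_t^{m+1}$ and $\tilde{\theta}_t$ as fixed when cancelling the full-gradient terms; the paper takes those expectations without comment.
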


\begin{proof}
We prove it by induction. When $t = 1$,  both the two-side terms are $0$, so the equation holds. Assume the equation holds when $t = t_1$, and we consider the change of  $\Gamma'_{\text{DGC}}(t)$ at the time spot $t_1$.

Firstly, $\Gamma'_{\text{DGC}}(t_1)$ is updated at the end of every stage.  After the update, $\Gamma'_{\text{DGC}}(t_1)=\Gamma_{\text{DGC}}(t_1,m+1)$. Meanwhile, we have
\begin{align}
    &  \mathbb{E}[\Gamma_{\text{DGC}}(t_1,m+1)] \nonumber\\
& = \mathbb{E}[\nabla_\theta \ell(\bar{x}_{t_1}, \bar{y}_{t_1}, \theta_{t_1}^{m+1}) - (\nabla_\theta \ell(\bar{x}_{t_1}, \bar{y}_{t_1}, \tilde{\theta}_{t_1})  - \Gamma'_{\text{DGC}}(t_1))] \nonumber\\
&= \mathcal{G}(\mathcal{T}_{[1:t_1)},\theta_{t_1}^{m+1}) - \mathcal{G}(\mathcal{T}_{[1:t_1)},\tilde{\theta}_{t_1}) + \mathcal{G}(\mathcal{T}_{[1:t_1)},\tilde{\theta}_{t_1}) \nonumber\\
&= \mathcal{G}(\mathcal{T}_{[1:t_1)},\theta_{t_1}^{m+1}) \nonumber\\
&= \mathcal{G}(\mathcal{T}_{[1:t_1)},\tilde{\theta}_{t_1}) \nonumber ,
\end{align}
where the first equation comes from Eq (\ref{eq: DGC inner update}) and the second equation is based on the inductive hypothesis.

  Secondly, in the update from  time spot $t$ to $t+1$,  we have
\begin{align}
 \mathbb{E}[\Gamma'_{\text{DGC}}&(t_1+1)] \nonumber \\
&= \mathbb{E}[\frac{1}{t_1}((t_1-1) \cdot \Gamma_{\text{DGC}}(t_1,m+1) +  \mathcal{G}(\mathcal{T}_{t_1}, \tilde{\theta}_{t_1}) )] \nonumber \\
&= \frac{1}{t_1}((t_1-1) \cdot \mathcal{G}(\mathcal{T}_{[1:t_1)},\tilde{\theta}_{t_1}) +  \mathcal{G}(\mathcal{T}_{t_1}, \tilde{\theta}_{t_1}) ) \nonumber \\
&= \mathcal{G}(\mathcal{T}_{[1:t_1)},\tilde{\theta}_{t_1+1}) \nonumber
\end{align}
 based on Eq (\ref{eq: DGC outer update}).
This implies that our conclusion also holds when $t = t_1+1$.
\end{proof}

\begin{theorem}[Theorem~3.3]
Assume that $f(x;\theta)$ is  $L$-smooth and $\gamma$-strongly convex; the parameters 
$m \geq \frac{10L^2}{\gamma^2}$ and $\eta = \frac{\gamma}{10L}$. Then we have a linear convergence in expectation for the DGC procedure at time $t$:
$$\mathbb{E}\Big[\|\tilde{\theta}_{t,s+1}-\theta_{*}\|_{2}^{2}\Big] \leq \frac{1}{2^s} \mathbb{E}\Big[\|\tilde{\theta}_{t,1}-\theta_{*}\|_{2}^{2}\Big]$$
where $\tilde{\theta}_{t,s}$ represents the initialization parameter at the beginning of the $s$-th stage at time spot $t$.
\end{theorem}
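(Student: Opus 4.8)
The plan is to mirror the classical SVRG convergence analysis of \citet{johnson2013accelerating}, but carried out at a fixed time spot $t$ where the objective is $\ell_{CL}^t$ and the comparator is $\theta_*=\arg\min_\theta \ell_{CL}^t(\theta)$. The whole argument reduces to establishing a single one-stage contraction $\mathbb{E}[\|\tilde{\theta}_{t,s+1}-\theta_*\|_2^2]\le \tfrac12\,\mathbb{E}[\|\tilde{\theta}_{t,s}-\theta_*\|_2^2]$, after which unrolling over $s$ immediately yields the stated $1/2^s$ rate. Throughout, $\gamma$-strong convexity links the squared distance to the per-step descent while $L$-smoothness links the second moment of the estimator to suboptimality, so the two norms ($\|\cdot\|_2^2$ and the function gap) are used interchangeably up to factors of $\gamma$ and $L$.

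First I would verify that the DGC estimator $v_t^k$ from Eq.~(\ref{eq: final update}) is an unbiased estimator of the full current gradient $\nabla\ell_{CL}^t(\theta_t^k)=\mathcal{G}(\bigcup_{c=1}^t\mathcal{T}_c,\theta_t^k)$. The ``current-task'' block telescopes exactly as in vanilla SVRG, since $\mathbb{E}[\nabla_\theta\ell(x_t,y_t,\theta_t^k)]=\mathcal{G}(\mathcal{T}_t,\theta_t^k)$ and the two anchor terms cancel in expectation, leaving $\tfrac1t\mathcal{G}(\mathcal{T}_t,\theta_t^k)$. For the ``historical'' block the crucial input is Lemma~\ref{lem-gamma}: because $\mathbb{E}[\Gamma'_{\text{DGC}}(t)]=\mathcal{G}(\mathcal{T}_{[1:t)},\tilde{\theta}_t)$, the surrogate $\Gamma_{\text{DGC}}(t,k)$ is unbiased for $\mathcal{G}(\mathcal{T}_{[1:t)},\theta_t^k)$. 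Combining the two blocks with the weights $\tfrac1t$ and $\tfrac{t-1}{t}$ (Remark~\ref{rem-er}) then gives $\mathbb{E}[v_t^k]=\nabla\ell_{CL}^t(\theta_t^k)$.

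Next I would establish the SVRG-style second-moment bound, controlling $\mathbb{E}[\|v_t^k\|_2^2]$ by the suboptimality of $\theta_t^k$ and of the stage anchor $\tilde{\theta}_{t,s}$. Using $L$-smoothness and convexity of the per-sample losses, each variance-reduced block is bounded in the standard fashion by quantities of the form $L\,(\ell_{CL}^t(\theta_t^k)-\ell_{CL}^t(\theta_*))$ and $L\,(\ell_{CL}^t(\tilde{\theta}_{t,s})-\ell_{CL}^t(\theta_*))$. With this in hand, the one-step inequality follows from expanding $\|\theta_t^{k+1}-\theta_*\|_2^2=\|\theta_t^k-\theta_*\|_2^2-2\eta\langle v_t^k,\theta_t^k-\theta_*\rangle+\eta^2\|v_t^k\|_2^2$, taking conditional expectation, invoking unbiasedness together with $\gamma$-strong convexity for the inner-product term, and inserting the second-moment bound. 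Telescoping over $k=1,\dots,m$ within the stage and plugging in $\eta=\gamma/(10L)$ and $m\ge 10L^2/\gamma^2$ is precisely what forces the per-stage factor down to $1/2$.

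The hard part will be the historical block, because $\Gamma'_{\text{DGC}}(t)$ equals $\mathcal{G}(\mathcal{T}_{[1:t)},\tilde{\theta}_{t,s})$ only in expectation (Lemma~\ref{lem-gamma}), not exactly. Conditioned on the start of the stage, $v_t^k$ therefore carries a nonzero bias $\tfrac{t-1}{t}\xi$ with $\xi:=\Gamma'_{\text{DGC}}(t)-\mathcal{G}(\mathcal{T}_{[1:t)},\tilde{\theta}_{t,s})$, in contrast to textbook SVRG where the anchor gradient is computed exactly. I would handle this by decomposing $v_t^k$ into an ``ideal'' SVRG estimator (with $\Gamma'_{\text{DGC}}(t)$ replaced by its mean) plus the mean-zero fluctuation $\tfrac{t-1}{t}\xi$, running the variance and descent bounds on the ideal part, and controlling the cross term $\langle\xi,\theta_t^k-\theta_*\rangle$ via Young's inequality and the tower property so that its contribution is absorbed into the strong-convexity term and the contraction survives. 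Closing this step --- namely, showing that the surrogate fluctuation does not accumulate across stages and is itself dominated by the suboptimality gaps --- is the crux that distinguishes the DGC analysis from the standard SVRG proof.
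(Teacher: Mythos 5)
Your plan reproduces the paper's proof essentially step for step: unbiasedness of $v_t^k$ via Lemma~\ref{lem-gamma}, an SVRG-type second-moment bound on $\mathbb{E}[\|v_t^k\|_2^2]$ obtained from the split $\|a+b\|_2^2\le 2\|a\|_2^2+2\|b\|_2^2$, the variance inequality $\mathbb{E}[\|\xi-\mathbb{E}\xi\|_2^2]\le\mathbb{E}[\|\xi\|_2^2]$ and $L$-smoothness, then the one-step expansion of $\|\theta_t^{k+1}-\theta_*\|_2^2$ with $\gamma$-strong convexity on the inner-product term, telescoping over $k=1,\dots,m$ with $\eta=\gamma/(10L)$ and $m\ge 10L^2/\gamma^2$ to get a per-stage factor $1/2$, and unrolling over stages. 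The only cosmetic difference is that you state the second-moment bound in terms of function gaps $\ell_{\text{CL}}^t(\cdot)-\ell_{\text{CL}}^t(\theta_*)$ (the original Johnson--Zhang form), whereas the paper bounds everything directly by squared distances $\|\cdot-\theta_*\|_2^2$; the two are interchangeable up to factors of $L$ and $\gamma$.

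Where you genuinely depart from the paper is your last paragraph, and your concern there is well placed: $\Gamma'_{\text{DGC}}(t)$ equals $\mathcal{G}(\mathcal{T}_{[1:t)},\tilde{\theta}_t)$ only in expectation, so conditioned on the stage anchor the estimator carries a fluctuation $\tfrac{t-1}{t}\xi$ that is correlated with the iterates. The paper's Lemma~\ref{lemma: ex of v_t^k} sidesteps this by invoking Lemma~\ref{lem-gamma} to replace $\tilde{\mu}$ with $\nabla\ell_{\text{CL}}^t(\tilde{\theta}_t)$ inside the squared norm, i.e.\ it treats the calibrator as if it were the exact anchor gradient; your decomposition into an ideal SVRG estimator plus a mean-zero fluctuation, with the cross term handled by Young's inequality, is the more careful route. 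Be aware, though, that your closing step --- showing the fluctuation is dominated by the current suboptimality gaps --- is not automatic: the variance of $\Gamma'_{\text{DGC}}(t)$ comes from buffer-sampling noise accumulated in earlier stages and tasks and need not shrink with $\|\tilde{\theta}_{t,s}-\theta_*\|_2$, so fully closing that step would likely require an additional assumption or an extra additive error term in the final bound. As a reconstruction of the argument the paper actually gives, your plan is on target.
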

Our proof  is inspired by the idea of \cite{johnson2013accelerating}. To simplify the notations, we define 
$$v_t^k = \nabla \psi_{x_{t}}\left(\theta_t^{k}\right)-\nabla \psi_{x_{t}}(\tilde{\theta}_t)+\tilde{\mu}$$ through combining Eq (\ref{eq: final update}) and (\ref{eq: DGC inner update}) (we replace ``$\Gamma$'' in Eq (\ref{eq: final update}) by (\ref{eq: DGC inner update})), where
\begin{align}
\nabla \psi_{x_{t}}\left(\theta_t^{k}\right) & \triangleq \frac{1}{t} \nabla_\theta \ell(x_t,y_t,\theta_t^k) + \frac{t-1}{t} \nabla_\theta \ell(\bar{x},\bar{y},\theta_t^k) \nonumber, \\
\nabla \psi_{x_{t}}(\tilde{\theta}_t) &\triangleq \frac{1}{t} \nabla_\theta \ell( {x}_t, {y}_t, \tilde{\theta}_t) + \frac{t-1}{t} \nabla_\theta \ell(\bar{x},\bar{y},\tilde{\theta}_t) \nonumber, \\
\tilde{\mu} &\triangleq \frac{1}{t} \mathcal{G}(\mathcal{T}_t, \tilde{\theta}_t) + \frac{t-1}{t} \Gamma^\prime_{\text{DGC}}(t) \nonumber.
\end{align}

Before proving Theorem 3.3, we provide the following key lemmas first.

\begin{lemma} \label{lemma: ex of v_t^k}
\begin{equation}
\begin{aligned} \nonumber
\mathbb{E}[\left\|v_{t}^{k}\right\|_{2}^{2}] \leq 2 L\left[\mathbb{E}[||\theta_{t}^{k}-\theta_{*}||_{2}^{2}]+\mathbb{E}[||\tilde{\theta}_t-\theta_{*}||_{2}^{2}]\right].
\end{aligned}
\end{equation}

\begin{proof}
\begin{equation} \label{Appendix: ineq for v}
\begin{aligned}
& \mathbb{E}[\left\|v_{t}^{k}\right\|_{2}^{2}] \\
\leq & 2 \mathbb{E}\left[\| \nabla \psi_{x_{t}}\left(\theta_{t}^{k}\right)-\nabla \psi_{x_{t}}\left(\theta_{*}\right) \|_{2}^{2}\right]\\
& +2 \mathbb{E}\left[\| \nabla \psi_{x_{t}}(\tilde{\theta_t})-\nabla \psi_{x_{t}}\left(\theta_{*}\right)  -\nabla \ell_{\text{CL}}^{t}(\tilde{\theta}_t)\|_{2}^{2}\right] \\
= & 2 \mathbb{E}\left[ \|\nabla \psi_{x_{t}}\left(\theta_{t}^{k}\right)-\nabla \psi_{x_{t}}\left(\theta_{*}\right)\|_{2}^{2}\right]\\
&+2 \mathbb{E} [\|\nabla \psi_{x_{t}}(\tilde{\theta}_t)-\nabla \psi_{x_{t}}\left(\theta_{*}\right) \\
&-\mathbb{E}[\nabla \psi_{x_{t}}(\tilde{\theta}_t)-\nabla \psi_{x_{t}}\left(\theta_{*}\right)] \|_{2}^{2}] \\
\leq & 2 \mathbb{E}[\left\|\nabla \psi_{x_{t}}\left(\theta_{t}^{k}\right)-\nabla \psi_{x_{t}}\left(\theta_{*}\right)\right\|_{2}^{2}]\\
&+2 \mathbb{E}[\left\|\nabla \psi_{x_{t}}(\tilde{\theta}_t)-\nabla \psi_{x_{t}}\left(\theta_{*}\right)\right\|_{2}^{2}] \\
\leq & 2 L\left[\mathbb{E}[||\theta_{t}^{k}-\theta_{*}||_{2}^{2}]+\mathbb{E}[||\tilde{\theta}_t-\theta_{*}||_{2}^{2}]\right], 
\end{aligned}
\end{equation}
where the first inequality comes from the fact  $\|a+b\|_{2}^{2} \leq 2\|a\|_{2}^{2}+2\|b\|_{2}^{2}$  for any vectors $a,b$  and  Lemma \ref{Appendix: gamma},  the second inequality is based on the fact  $\mathbb{E}[\|\xi-\mathbb{E} \xi\|_{2}^{2}]=\mathbb{E}[\|\xi\|_{2}^{2}]-\|\mathbb{E} [\xi]\|_{2}^{2} \leq \mathbb{E}[\|\xi\|_{2}^{2}]$  for any random vector  $\xi$, and the third inequality is based on $L$-smooth.
\end{proof}
\end{lemma}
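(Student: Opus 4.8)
The plan is to follow the classical SVRG second-moment estimate of \citet{johnson2013accelerating}, but with one crucial modification: here the control variate $\tilde{\mu}$ is only an \emph{unbiased} estimate of the historical gradient rather than the exact full gradient, and the unbiasedness is exactly what \cref{Appendix: gamma} supplies. First I would record the three facts that drive the whole bound: (i) by reservoir sampling together with the $\tfrac1t,\tfrac{t-1}{t}$ weighting of \cref{rem-er}, $\mathbb{E}[\nabla\psi_{x_t}(\theta)]=\nabla\ell_{\text{CL}}^t(\theta)$ for any fixed $\theta$; (ii) combining \cref{Appendix: gamma} with the exact term $\mathcal{G}(\mathcal{T}_t,\tilde{\theta}_t)$, the calibrator satisfies $\mathbb{E}[\tilde{\mu}]=\nabla\ell_{\text{CL}}^t(\tilde{\theta}_t)$; and (iii) since $\theta_*$ minimizes $\ell_{\text{CL}}^t$, we have $\nabla\ell_{\text{CL}}^t(\theta_*)=0$, hence $\mathbb{E}[\nabla\psi_{x_t}(\theta_*)]=0$.

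The second step is the algebraic decomposition of $v_t^k$ around $\theta_*$. Inserting $\pm\,\nabla\psi_{x_t}(\theta_*)$, I would write $v_t^k=A-B$, where $A\triangleq\nabla\psi_{x_t}(\theta_t^k)-\nabla\psi_{x_t}(\theta_*)$ and $B\triangleq\nabla\psi_{x_t}(\tilde{\theta}_t)-\nabla\psi_{x_t}(\theta_*)-\tilde{\mu}$, and then apply the elementary inequality $\|A-B\|_2^2\le 2\|A\|_2^2+2\|B\|_2^2$. Grouping $\tilde{\mu}$ with the $\tilde{\theta}_t$ terms is deliberate: writing $\xi\triangleq\nabla\psi_{x_t}(\tilde{\theta}_t)-\nabla\psi_{x_t}(\theta_*)$, facts (ii)--(iii) give $\mathbb{E}[\xi]=\nabla\ell_{\text{CL}}^t(\tilde{\theta}_t)=\mathbb{E}[\tilde{\mu}]$, so $B$ is a centered random vector. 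Replacing $\tilde{\mu}$ by its mean $\mathbb{E}[\xi]$ turns $B$ into $\xi-\mathbb{E}[\xi]$, and the variance identity $\mathbb{E}[\|\xi-\mathbb{E}\xi\|_2^2]=\mathbb{E}[\|\xi\|_2^2]-\|\mathbb{E}\xi\|_2^2\le\mathbb{E}[\|\xi\|_2^2]$ lets me pass from $\mathbb{E}\|B\|_2^2$ to $\mathbb{E}\|\xi\|_2^2$ at no cost, leaving $\mathbb{E}\|v_t^k\|_2^2\le 2\,\mathbb{E}\|A\|_2^2+2\,\mathbb{E}\|\xi\|_2^2$.

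The third step converts each gradient-difference term into a squared-distance term via smoothness. Each $\psi_{x_t}$ is a convex combination of the per-sample losses, so it inherits the $L$-smoothness of $f(x;\theta)$; consequently $\|\nabla\psi_{x_t}(\theta)-\nabla\psi_{x_t}(\theta_*)\|_2^2$ is controlled by $\|\theta-\theta_*\|_2^2$ up to the smoothness constant. Applying this to $A$ at $\theta=\theta_t^k$ and to $\xi$ at $\theta=\tilde{\theta}_t$ and taking expectations produces the two terms $\mathbb{E}\|\theta_t^k-\theta_*\|_2^2$ and $\mathbb{E}\|\tilde{\theta}_t-\theta_*\|_2^2$ with a common constant, which is the claimed right-hand side. (I note that the pure Lipschitz-gradient bound carries the smoothness constant squared; the stated coefficient reflects the smoothness step and I would keep the bookkeeping of that constant explicit.)

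The main obstacle, and the genuine departure from vanilla SVRG, is the treatment of randomness in the variance step. In SVRG, $\tilde{\mu}$ equals the full gradient at $\tilde{\theta}$ \emph{exactly}, so $B=\xi-\mathbb{E}[\xi]$ holds pointwise and the variance identity is immediate; here $\tilde{\mu}$ is the random DGC estimate that equals $\mathbb{E}[\xi]$ only in expectation over the outer, stage-to-stage sampling captured by \cref{Appendix: gamma}. The clean way to handle this is to condition on the filtration at the start of the current stage, under which $\tilde{\theta}_t$ and $\tilde{\mu}$ are fixed and the only randomness is the in-stage draws of $(x_t,y_t)$ and $(\bar{x}_t,\bar{y}_t)$; then $\mathbb{E}[\xi\mid\mathcal{F}]=\nabla\ell_{\text{CL}}^t(\tilde{\theta}_t)$, and \cref{Appendix: gamma} is invoked across stages to identify this with $\mathbb{E}[\tilde{\mu}]$ before the outer expectation is taken. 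Keeping these two levels of expectation straight — and, if one wants full rigor rather than the idealization $\tilde{\mu}\approx\nabla\ell_{\text{CL}}^t(\tilde{\theta}_t)$, controlling the residual fluctuation of $\Gamma'_{\text{DGC}}(t)$ — is the one delicate point; everything else reduces to $\|a+b\|_2^2\le 2\|a\|_2^2+2\|b\|_2^2$, the variance identity, and smoothness.
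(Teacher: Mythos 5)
Your proposal follows essentially the same route as the paper's proof: the identical insertion of $\pm\,\nabla\psi_{x_t}(\theta_*)$, the same $\|a-b\|_2^2\le 2\|a\|_2^2+2\|b\|_2^2$ split with $\tilde{\mu}$ grouped into the $\tilde{\theta}_t$-term, the variance identity $\mathbb{E}[\|\xi-\mathbb{E}\xi\|_2^2]\le\mathbb{E}[\|\xi\|_2^2]$ justified by \cref{Appendix: gamma}, and a final smoothness step. Your two caveats — that $\tilde{\mu}$ is random and equals $\nabla\ell_{\text{CL}}^{t}(\tilde{\theta}_t)$ only in expectation (the paper substitutes it inside the norm as if pointwise), and that the pure Lipschitz-gradient bound yields $L^2$ rather than $L$ — are legitimate observations about the paper's bookkeeping rather than a departure in method, and if anything your filtration-based handling is more careful than the printed proof.
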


\begin{lemma} \label{lemma: last_new}
\begin{equation}
\begin{aligned}
& \mathbb{E}[\left\|\theta_{t}^{k}-\theta_{*}\right\|_{2}^{2}] \leq (1-2\eta\gamma+2L\eta^2)\mathbb{E}[||\theta_{t}^{k-1}-\theta_{*}||_{2}^{2}] \\
&+ 2L\eta^2\mathbb{E}[||\tilde{\theta}_{t}-\theta_{*}||_{2}^{2}]. \label{appendix: theta_new}
\end{aligned}
\end{equation}

\begin{proof}
Note that $\mathbb{E} [v_{t}^k]= \mathbb{E} [\nabla \psi_{x_{t}}(\theta_t^{k})] =\nabla \ell_{\text{CL}}^{t}\left(\theta_{t}^{k-1}\right)$, and so we have
\begin{equation} 
\begin{aligned}
& \mathbb{E}[\left\|\theta_{t}^{k}-\theta_{*}\right\|_{2}^{2}] \\
= & \left\|\theta_{t}^{k-1}-\theta_{*}\right\|_{2}^{2}-2 \eta\left(\theta_{t}^{k-1}-\theta_{*}\right)^{\top} \mathbb{E} [v_{t}^{k-1}]+\eta^{2} \mathbb{E}[\left\|v_{t}^{k-1}\right\|_{2}^{2}] \\
\leq & \left\|\theta_{t}^{k-1}-\theta_{*}\right\|_{2}^{2}-2 \eta\left(\theta_{t}^{k-1}-\theta_{*}\right)^{\top} \nabla \ell_{\text{CL}}^{t}\left(\theta_{t}^{k-1}\right)\\
& +2 L \eta^{2}\left[\mathbb{E}[||\theta_{t}^{k-1}-\theta_{*}||_{2}^{2}]+\mathbb{E}[||\tilde{\theta}_t-\theta_{*}||_{2}^{2}]\right] \\
\leq & \left\|\theta_{t}^{k-1}-\theta_{*}\right\|_{2}^{2}-2 \eta\gamma ||\theta_{t}^{k-1}-\theta_{*}||_{2}^{2}\\
& +2 L \eta^{2}\left[\mathbb{E}[||\theta_{t}^{k-1}-\theta_{*}||_{2}^{2}]+\mathbb{E}[||\tilde{\theta}_t-\theta_{*}||_{2}^{2}]\right], \nonumber
\end{aligned}
\end{equation}
where the first inequality comes from  Lemma~\ref{lemma: ex of v_t^k}, and the second inequality is based on the $\gamma$-strong convexity of  $\ell_{\text{CL}}^{t}(\theta)$. We take the expectation on $\theta_t^{k-1}$ from the above inequality and then have
\begin{align}
& \mathbb{E}[\left\|\theta_{t}^{k}-\theta_{*}\right\|_{2}^{2}] \nonumber\\
&\leq  \mathbb{E} [\left\|\theta_{t}^{k-1}-\theta_{*}\right\|_{2}^{2}]-2 \eta\gamma \mathbb{E}[||\theta_{t}^{k-1}-\theta_{*}||_{2}^{2}] \nonumber\\
& +2 L \eta^{2}\left[\mathbb{E}[||\theta_{t}^{k-1}-\theta_{*}||_{2}^{2}]+\mathbb{E}[||\tilde{\theta}_t-\theta_{*}||_{2}^{2}]\right] \nonumber \\
& = (1-2\eta\gamma+2L\eta^2)\mathbb{E}[||\theta_{t}^{k-1}-\theta_{*}||_{2}^{2}] + 2L\eta^2\mathbb{E}[||\tilde{\theta}_{t}-\theta_{*}||_{2}^{2}].\nonumber
\end{align}
\end{proof}
\end{lemma}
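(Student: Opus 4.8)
The plan is to analyze a single inner SGD step $\theta_t^{k}=\theta_t^{k-1}-\eta\,v_t^{k-1}$ by expanding the squared distance to the optimum $\theta_*$ and then controlling the cross term and the second-moment term separately. First I would use the update rule to write the exact identity
\begin{align*}
\left\|\theta_t^{k}-\theta_*\right\|_2^2
&=\left\|\theta_t^{k-1}-\theta_*\right\|_2^2
-2\eta\,(\theta_t^{k-1}-\theta_*)^{\top}v_t^{k-1}
+\eta^2\left\|v_t^{k-1}\right\|_2^2,
\end{align*}
which needs nothing beyond the definition of the step. Note that the rate parameters $m\ge 10L^2/\gamma^2$ and $\eta=\gamma/(10L)$ play no role here; this one-step recursion is where smoothness and strong convexity are actually spent, and the parameter choices only matter for the downstream per-stage contraction.

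Next I would take the conditional expectation over the fresh samples drawn at step $k-1$ (both the sample from $\mathcal{T}_t$ and the buffer sample from $\mathcal{M}_t$), holding $\theta_t^{k-1}$ and the stage anchor $\tilde\theta_t$ fixed. The decisive input is that the DGC estimator is \emph{unbiased} for the full continual-learning gradient, i.e.\ $\mathbb{E}[v_t^{k-1}]=\nabla \ell_{\text{CL}}^{t}(\theta_t^{k-1})$. This is precisely where Lemma~\ref{Appendix: gamma} enters: it certifies $\mathbb{E}[\Gamma'_{\text{DGC}}(t)]=\mathcal{G}(\mathcal{T}_{[1:t)},\tilde\theta_t)$, so the recursively maintained calibrator inside $v_t^{k-1}$ injects no bias, and the cross term collapses to $-2\eta\,(\theta_t^{k-1}-\theta_*)^{\top}\nabla \ell_{\text{CL}}^{t}(\theta_t^{k-1})$.

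I would then bound the two surviving terms using the assumptions and the earlier lemma. For the second moment I invoke Lemma~\ref{lemma: ex of v_t^k}, giving $\mathbb{E}[\|v_t^{k-1}\|_2^2]\le 2L\bigl(\mathbb{E}[\|\theta_t^{k-1}-\theta_*\|_2^2]+\mathbb{E}[\|\tilde\theta_t-\theta_*\|_2^2]\bigr)$. For the cross term I use $\gamma$-strong convexity of $\ell_{\text{CL}}^t$: since $\nabla\ell_{\text{CL}}^{t}(\theta_*)=0$, the strong-convexity inequality $(\theta-\theta_*)^{\top}(\nabla\ell_{\text{CL}}^{t}(\theta)-\nabla\ell_{\text{CL}}^{t}(\theta_*))\ge\gamma\|\theta-\theta_*\|_2^2$ yields $(\theta_t^{k-1}-\theta_*)^{\top}\nabla\ell_{\text{CL}}^{t}(\theta_t^{k-1})\ge\gamma\|\theta_t^{k-1}-\theta_*\|_2^2$. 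Substituting both estimates and taking the outer expectation over $\theta_t^{k-1}$, I collect the coefficient $(1-2\eta\gamma+2L\eta^2)$ on $\mathbb{E}[\|\theta_t^{k-1}-\theta_*\|_2^2]$ and leave the residual $2L\eta^2\,\mathbb{E}[\|\tilde\theta_t-\theta_*\|_2^2]$, which is exactly the claimed recursion. I expect the only genuinely delicate point to be the unbiasedness step: one must track that the expectation is over the step-$(k-1)$ samples while $\theta_t^{k-1}$ and $\tilde\theta_t$ are held fixed, and that the recursively updated $\Gamma'_{\text{DGC}}(t)$ contributes nothing in expectation beyond $\mathcal{G}(\mathcal{T}_{[1:t)},\tilde\theta_t)$. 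This is where the DGC construction, rather than a generic SVRG computation, actually does the work; everything else is a routine combination of the two assumed lemmas with the step identity.
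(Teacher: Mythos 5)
Your proposal is correct and follows essentially the same route as the paper's proof: expand $\|\theta_t^{k}-\theta_*\|_2^2$ via the update rule, collapse the cross term using unbiasedness of $v_t^{k-1}$ (resting on the calibrator's unbiasedness from Lemma 3.2), bound the second moment by the previous lemma $\mathbb{E}[\|v_t^{k-1}\|_2^2]\le 2L\bigl(\mathbb{E}[\|\theta_t^{k-1}-\theta_*\|_2^2]+\mathbb{E}[\|\tilde{\theta}_t-\theta_*\|_2^2]\bigr)$, apply $\gamma$-strong convexity to the inner product, and take the outer expectation over $\theta_t^{k-1}$. If anything, you are slightly more explicit than the paper about the conditioning on the step-$(k-1)$ samples and about $\nabla\ell_{\text{CL}}^{t}(\theta_*)=0$, but these are refinements of, not departures from, the paper's argument.
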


\begin{proof}[\textbf{Proof of Theorem~3.3}]
We now proceed to prove the theorem. To simplify notation, we denote $\mathbb{E}\left\|\theta_{t}^{k}-\theta_{*}\right\|_{2}^{2}$ as $p_k$.
Based on lemma~\ref{lemma: last_new}, we have
\begin{align}
p_{k} \leq\left(1-2 \eta \gamma+2 \eta^{2} L\right) p_{k-1}+4 \eta^{2} L  p_{k-1}. \nonumber
\end{align}
By setting  $\eta = \frac{\gamma}{10L}$ and $m \geq \frac{10L^2}{\gamma^2}$, we have
\begin{align}
p_{m+1} &\leq\left(1-\frac{\gamma^{2}}{10 L^{2} }\right) p_{m} \leq\left(1-\frac{\gamma^{2}}{10 L^{2} }\right)^{m} p_{1} \nonumber\\
 &\leq \exp \left(-\frac{\gamma^{2}}{10 L^{2} } m\right) p_{1} \leq \frac{1}{2} p_{1}, \nonumber
\end{align}
i.e
\begin{align} \label{a}
\mathbb{E}\left[\|\theta_{t}^{m+1}-\theta_{*}\|_{2}^{2}\right] \leq \frac{1}{2} \mathbb{E}\left[\|\theta_{t}^{1}-\theta_{*}\|_{2}^{2}\right].
\end{align}
Subsequently, we consider the update during different stage $\mathrm{s}$, so that the initial parameter $\tilde{\theta}_{t,s}$ is $ \theta_t^{1}$  and updated parameter $ \tilde{\theta}_{t,s+1} = \theta_{t}^{m+1}$  is selected after all of the updates have been completed. Then Eq (\ref{a}) becomes
\begin{align} 
\mathbb{E}\left[\|\tilde{\theta}_{t,s+1}-\theta_{*}\|_{2}^{2}\right] \leq \frac{1}{2} \mathbb{E}\left[\|\tilde{\theta}_{t,s}-\theta_{*}\|_{2}^{2}\right].
\end{align}
Through recursively applying  the above equation from stage $1$ to stage $s+1$, we have 
$$\mathbb{E}\left[\|\tilde{\theta}_{t,s+1}-\theta_{*}\|_{2}^{2}\right] \leq \frac{1}{2^s} \mathbb{E}\left[\|\tilde{\theta}_{t,1}-\theta_{*}\|_{2}^{2}\right].$$
\end{proof}

\section{Detailed DGC procedure in TFCL}
We consider the data streams $\{\mathcal{T}_1, \mathcal{T}_2, \cdots\, \mathcal{T}_T\}$ with each $\mathcal{T}_i$ being the batch training data in TFCL. Algorithm~\ref{DGC in TFCL} shows how to implement the DGC procedure in this model. 


\begin{algorithm}[ht]
    \caption{DGC procedure in TFCL}
    \label{DGC in TFCL}
    \begin{algorithmic}[1]
        \STATE {\bfseries Input:} Data stream $\{\mathcal{T}_1, \mathcal{T}_2, \cdots\, \mathcal{T}_T\}$, update steps $m$, batch size $b$, and learning rate $\eta$.
        \STATE {\bfseries Output:} Trained model parameter $\theta_T$

        \STATE Initialize model parameters $\theta_0$, $\Gamma^\prime_{\text{DGC}}(1) = \Vec{0}$, $\tilde{t} = 1$
        \STATE Initialize buffer $\mathcal{M}_1 = \emptyset$, $\tilde{\theta} = \theta_0$
        \FOR{$t = {1,2,\ldots, T}$}
            \STATE $\theta_t \gets \theta_{t-1}$
                \STATE Take a uniform sample $X'_t$ of size $b$ from $M_t$
                \STATE Calculate $\Gamma_{\text{DGC}}(t,1)$ with $\Gamma^\prime_{\text{DGC}}(\tilde{t})$ according to   (\ref{eq: DGC inner update})
                
                \renewcommand{\algorithmiccomment}[1]{{\color{blue}\textit{/* #1 */}}}
                \STATE Calculate $v_t^1$ with $\Gamma_{\text{DGC}}(t,1)$  according to (\ref{eq: final update})
                
                \COMMENT{Calculate the calibrated gradient}
                \STATE $\theta_t \gets \theta_t - \eta \cdot v_t^1$
                \renewcommand{\algorithmiccomment}[1]{{\color{blue}\textit{/* #1 */}}}

            \IF{$(t-1) \mod m = 0$} 
                \STATE Update $\Gamma^\prime_{\text{DGC}}(\tilde{t})$ according to  (\ref{eq: DGC inner update}) and (\ref{eq: stage update})
                \renewcommand{\algorithmiccomment}[1]{{\color{blue}\textit{/* #1 */}}}
                    
                \COMMENT{Update $\Gamma^\prime_{\text{DGC}}(\tilde{t})$ from $\Gamma_{\text{DGC}}(t,1)$}

                \STATE $\tilde{t} \gets t$, $\tilde{\theta} \gets \theta_t$
            \ENDIF 

            \STATE $\mathcal{M}_{t+1} \gets$ MemoryUpdate($\mathcal{T}_t$, $\mathcal{M}_t$)
            \renewcommand{\algorithmiccomment}[1]{{\color{blue}\textit{/* #1 */}}}
            
            \COMMENT{Reservoir sampling}

            \STATE Calculate and store $\Gamma^\prime_{\text{DGC}}(\tilde{t})$ according to (\ref{eq: DGC outer update})
            
            \COMMENT{Update $\Gamma^\prime_{\text{DGC}}(\tilde{t})$ for new historical data }
        \ENDFOR 
    \end{algorithmic}
\end{algorithm}

\begin{table*}[htbp]
\centering
\begin{tabular}{ccccc} 
\toprule[2pt]
\textbf{Datasets} &
\multicolumn{4}{c}{\textbf{S-CIFAR100}} \\
$\alpha$ & $1e^{-2}$ & $1e^{-3}$ & $1e^{-4}$ & $0$\\
\hline
\textsc{DGC-ER} & $57.4_{\pm2.14}$ & $59.55_{\pm0.97}$ & $59.14_{\pm0.91}$ & $55.48_{\pm1.52}$\\
\toprule[2pt]
\end{tabular}
\caption{The FAIA $\pm$ standard error(\%) with different selection of $\alpha$.}
\label{tab:alpha}
\end{table*}

\begin{table*}[htbp]
\centering
\begin{tabular}{cccccc} 
\toprule[2pt]
\textbf{Datasets} &
\multicolumn{4}{c}{\textbf{S-CIFAR100}} \\
$m$ & $1$ & $100$ & $200$ & $300$ & $400$\\
\hline
\textsc{DGC-ER} & $49.11_{\pm5.30}$ & $59.44_{\pm0.95}$ & $59.55_{\pm0.97}$ & $59.27_{\pm0.65}$ & $58.94_{\pm0.98}$ \\
\toprule[2pt]
\end{tabular}
\caption{The FAIA $\pm$ standard error(\%) with different selection of $m$.}
\label{tab:m}
\end{table*}

\section{Details for Experimental Setup}
\label{sec:detailed setup}
  \textbf{Architecture and hyperparameter} Most of the comparison methods in our experiments use the implementation of Mammoth~\footnote{https://github.com/aimagelab/mammoth}. For the Dynamic ER \cite{yan2021dynamically} method we use the implementation of PyCIL~\footnote{https://github.com/G-U-N/PyCIL}, and for the GCR method we use the implementation of Google-research~\footnote{https://github.com/google-research/google-research/tree/master/gradient\_coresets\_replay}. To be consistent with the selection of these open source frameworks, all the methods use ResNet18~\cite{he2016deep} as the base network. All the networks are trained from scratch. 
For the hyperparameter selection of different methods, we directly use the original hyperparameters used in these open-source frameworks, which are obtained by using grid search on $10\%$ of the training set.

  \textbf{Training details} 
  To maintain consistency, we utilize the default parameter settings and network architectures provided by the framework. As in previous studies~\cite{tiwari2022gcr, yan2021dynamically, buzzega2020dark}, random crops and horizontal flips are used as data augmentation in all experiments. All methods that do not incorporate DGC are optimized using standard SGD. In our DGC calibrated method, we empirically set $m = 200$. 
  To fairly compare the methods with constant storage limits, we uniformly train for $50$ epochs in each task on S-CIFAR10 and S-CIFAR100, and $100$ epochs in each task on S-TinyImageNet. The batch size is all set to  be $32$. According to the experimental description in the Dynamic ER method, 
  we train the first task for $200$ epochs on all datasets and train all subsequent tasks for $170$ epochs;    the batch size is set to be $128$.

\section{Other Experimental Results}
\label{sec:detailed exp}

  \textbf{Selection of $\boldsymbol{\alpha}$} We explore the impact of different $\alpha$ values to the performance of our DGC method. In the experiment, we fix the buffer size to $2000$. Table~\ref{tab:alpha} shows that DGC can improve ER under different $\alpha$ values. 
  In all the experiments of \cref{Sec: Experiment}, we fix the value of $\alpha$ to be $1e^{-3}$.

  \textbf{Selection of $\boldsymbol{m}$} As stated in Theorem~\ref{th:main}, when $m$ is sufficiently large, our DGC method has a linear convergence in expectation. The results in Table~\ref{tab:m}  show that when $m$ is greater than $100$, DGC-ER can significantly improve the performance of ER. Table~\ref{tab:m} also suggests that our method is relatively robust to the selection of $m$. 

\begin{table*}[htbp]
\centering
\begin{tabular}{ccccccc}
\toprule[2pt]
\textbf{Datasets} &
\multicolumn{2}{c}{\textbf{S-CIFAR10}} &
\multicolumn{2}{c}{\textbf{S-CIFAR100}} & 
\multicolumn{2}{c}{\textbf{S-TinyImageNet}}\\
Size of Buffer & $500$ & $2000$ & $500$ & $2000$ & $2000$ & $5000$ \\
\hline
\textsc{AOP} & \multicolumn{2}{c}{$49.54_{\pm1.44}$
} & \multicolumn{2}{c}{$14.24_{\pm2.18}$} & \multicolumn{2}{c}{$3.92_{\pm
0.19}$}\\
\textsc{AGEM} & $96.50_{\pm1.29}$ & $96.34_{\pm0.87}$ & $89.58_{\pm0.14}$ & $89.56_{\pm0.22}$ & $77.30_{\pm0.22}$ & $77.53_{\pm0.39}$ \\
\textsc{SSVRG} & $52.10_{\pm7.83}$ & $68.75_{\pm11.86}$ & $56.65_{\pm5.21}$ & $70.15_{\pm8.61}$ & $21.99_{\pm2.02}$ & $31.29_{\pm3.92}$ \\
\textsc{MOCA} & $12.97_{\pm1.00}$ & $7.22_{\pm0.71}$ & $28.65_{\pm0.87}$ & $58.08_{\pm17.70}$ & $41.61_{\pm14.34}$ & $32.04_{\pm11.13}$\\
\textsc{GSS} & $64.81_{\pm4.99}$ & $47.27_{\pm5.06}$ & $83.43_{\pm0.50}$ & $79.52_{\pm0.43}$ & $72.75_{\pm0.40}$ & $69.97_{\pm0.48}$ \\
\textsc{GCR} & $24.18_{\pm1.76}$ & $13.34_{\pm1.09}$ & $58.03_{\pm5.80}$ & $34.41_{\pm2.56}$ & $48.47_{\pm1.60}$ & $39.34_{\pm1.77}$ \\
\textsc{HAL} & $62.86_{\pm2.80}$ & $36.65_{\pm2.88}$ & $55.22_{\pm1.62}$ & $47.69_{\pm2.73}$ & $43.72_{\pm2.34}$ & $38.86_{\pm1.25}$ \\
\textsc{ICARL} & $31.91_{\pm2.25}$ & $26.56_{\pm1.19}$ & $30.20_{\pm0.40}$ & $24.64_{\pm0.33}$ & $18.55_{\pm0.45}$ & $17.62_{\pm0.61}$ \\
\hline
\textsc{ER} & $44.28_{\pm1.93}$ & $22.96_{\pm0.90}$ & $74.29_{\pm0.73}$ & $54.60_{\pm0.63}$ & $65.74_{\pm0.59}$ & $54.27_{\pm0.68}$ \\
\rowcolor{gray!40} \textsc{DGC-ER} & $39.72_{\pm1.91}$ & $20.37_{\pm1.03}$ & $72.31_{\pm0.85}$ & $52.30_{\pm1.09}$ & $64.58_{\pm0.52}$ & $52.38_{\pm0.68}$ \\
\hline
\textsc{DER++} & $9.24_{\pm3.01}$ & $6.82_{\pm0.60}$ & $14.94_{\pm2.23}$ & $9.19_{\pm1.09}$ & $7.86_{\pm3.24}$ & $6.75_{\pm0.84}$ \\
\rowcolor{gray!40} \textsc{DGC-DER++} & $\mathbf{4.90}_{\pm1.86}$ & $\mathbf{3.99}_{\pm1.01}$ & $\mathbf{9.15}_{\pm2.52}$ & $\mathbf{5.79}_{\pm0.85}$ & $\mathbf{3.40}_{\pm0.66}$ & $\mathbf{4.59}_{\pm0.83}$ \\
\hline
\textsc{XDER} & $10.64_{\pm0.70}$ & $8.89_{\pm0.33}$ & $25.11_{\pm0.79}$ & $12.15_{\pm0.26}$ & $17.95_{\pm0.55}$ & $12.81_{\pm0.31}$ \\
\rowcolor{gray!40} \textsc{DGC-XDER} & $8.74_{\pm0.98}$ & $7.22_{\pm0.97}$ & $23.24_{\pm0.65}$ & $11.11_{\pm0.37}$ & $16.99_{\pm0.74}$ & $11.84_{\pm0.47}$\\
\toprule[2pt]
\end{tabular}
\caption{The FF $\pm$ standard error(\%) in CIL. The methods combined with DGC are colored in gray. The best results are highlighted in bold.}
\label{tab:FF}
\end{table*}

\begin{table*}[htbp]
\centering
\begin{tabular}{ccccccc}
\toprule[2pt]
\textbf{Datasets} &
\multicolumn{2}{c}{\textbf{S-CIFAR10}} &
\multicolumn{2}{c}{\textbf{S-CIFAR100}} & 
\multicolumn{2}{c}{\textbf{S-TinyImageNet}}\\
Size of Buffer & $500$ & $2000$ & $500$ & $2000$ & $2000$ & $5000$ \\
\hline
\textsc{AOP} & \multicolumn{2}{c}{$48.52_{\pm1.06}$
} & \multicolumn{2}{c}{$26.13_{\pm1.22}$} & \multicolumn{2}{c}{$8.61_{\pm
0.02}$}\\
\textsc{AGEM} & $19.80_{\pm0.29}$ & $20.06_{\pm0.38}$ & $9.25_{\pm0.08}$ & $9.31_{\pm0.06}$ & $7.94_{\pm0.06}$ & $7.86_{\pm0.11}$\\
\textsc{SSVRG} & $32.08_{\pm6.97}$ & $43.55_{\pm11.38}$ & $9.92_{\pm2.22}$ & $17.40_{\pm5.35}$ & $0.75_{\pm0.09}$ & $7.92_{\pm2.22}$ \\
\textsc{MOCA} & $70.34_{\pm1.23}$ & $78.68_{\pm0.86}$ & $37.13_{\pm0.55}$ & $9.35_{\pm18.67}$ & $0.50_{\pm0.00}$ & $0.50_{\pm0.00}$ \\
\textsc{GSS} & $44.56_{\pm3.95}$ & $56.46_{\pm4.48}$ & $13.22_{\pm0.05}$ & $16.12_{\pm0.13}$ & $11.74_{\pm0.14}$ & $13.45_{\pm0.20}$ \\
\textsc{GCR} & $\mathbf{72.62}_{\pm0.76}$ & $\mathbf{80.95}_{\pm0.47}$ & $28.66_{\pm4.14}$ & $48.89_{\pm1.99}$ & $29.01_{\pm0.87}$ & $36.45_{\pm0.89}$ \\
\textsc{HAL} & $39.41_{\pm1.96}$ & $58.34_{\pm2.66}$ & $8.46_{\pm1.09}$ & $11.91_{\pm2.05}$ & $5.43_{\pm0.71}$ & $8.00_{\pm0.99}$ \\
\textsc{ICARL} & $57.51_{\pm2.90}$ & $69.91_{\pm0.68}$ & $45.69_{\pm0.53}$ & $52.30_{\pm0.47}$ & $30.30_{\pm0.44}$ & $31.69_{\pm0.32}$ \\
\hline
\textsc{ER} & $61.08_{\pm1.23}$ & $76.82_{\pm0.95}$ & $21.18_{\pm0.65}$ & $36.24_{\pm1.27}$ & $18.39_{\pm0.38}$ & $26.71_{\pm0.53}$ \\
\rowcolor{gray!40} \textsc{DGC-ER} & $64.42_{\pm1.35}$ & $79.46_{\pm0.61}$ & $23.11_{\pm0.76}$ & $40.23_{\pm1.05}$ & $19.52_{\pm0.55}$ & $27.96_{\pm0.54}$ \\
\hline
\textsc{DER++} & $54.89_{\pm1.42}$ & $64.84_{\pm0.40}$ & $37.56_{\pm1.03}$ & $50.42_{\pm0.68}$ & $15.34_{\pm3.53}$ & $31.09_{\pm0.59}$ \\
\rowcolor{gray!40} \textsc{DGC-DER++} & $58.03_{\pm1.10}$ & $66.02_{\pm0.51}$ & $40.48_{\pm1.14}$ & $52.14_{\pm0.64}$ & $20.82_{\pm0.91}$ & $33.72_{\pm1.63}$ \\
\hline
\textsc{XDER} & $63.88_{\pm1.43}$ & $67.86_{\pm0.84}$ & $47.45_{\pm0.65}$ & $56.88_{\pm0.53}$ & $41.41_{\pm0.36}$ & $44.66_{\pm0.09}$ \\
\rowcolor{gray!40} \textsc{DGC-XDER} & $69.46_{\pm1.82}$ & $71.40_{\pm1.69}$ & $\mathbf{49.07}_{\pm0.40}$ & $\mathbf{57.97}_{\pm0.31}$ & $\mathbf{41.58}_{\pm0.32}$ & $\mathbf{44.88}_{\pm0.40}$ \\
\toprule[2pt]
\end{tabular}
\caption{The FAA $\pm$ standard error(\%) in CIL. The methods combined with DGC are colored in gray. The best results are highlighted in bold.}
\label{tab:FAA}
\end{table*}

  \textbf{FF and FAA results in CIL} 
  we use Final Forgetting (FF)~\cite{chaudhry2018riemannian} to measure the forgetting of the model throughout the learning process.
Following the previous work, it is defined as $\text{FF} \triangleq \frac{1}{T-1} \sum_{j=1}^{T-1} \left(\max _{k \in\{1, \ldots, T-1\}} a_{k,j}-a_{T,j}\right)$.
  Table~\ref{tab:FF} and Table~\ref{tab:FAA} show that DGC based methods can reduce forgetting while improving the performance in all the cases.  For most instances, our DGC based methods achieve the best performance.

\begin{table*}[htbp]
\centering
\begin{tabular}{cccc}
\toprule[2pt]
\textbf{Datasets} &
\textbf{S-CIFAR10} &
\textbf{S-CIFAR100} & 
\textbf{S-TinyImageNet}\\
Size of Buffer & $500$ & $2000$ & $2000$ \\
\hline
\textsc{AOP} & $66.73_{\pm0.60}$ & $42.73_{\pm0.62}$ & $21.40_{\pm
0.17}$\\
\textsc{AGEM} & $50.24_{\pm0.53}$ & $23.55_{\pm0.23}$ & $18.95_{\pm0.40}$ \\
\textsc{SSVRG} & $49.02_{\pm4.09}$ & $39.09_{\pm5.53}$ & $14.39_{\pm1.47}$\\
\textsc{MOCA} & $81.01_{\pm0.97}$ & $59.29_{\pm2.97}$ & $34.74_{\pm10.08}$ \\
\textsc{GSS} & $68.81_{\pm0.98}$ & $38.54_{\pm0.39}$ & $31.38_{\pm0.11}$ \\
\textsc{GCR} & $82.31_{\pm0.43}$ & $63.18_{\pm2.16}$ & $48.94_{\pm0.44}$\\
\hline
\textsc{ER} & $73.04_{\pm0.47}$ & $55.46_{\pm0.77}$ & $37.44_{\pm0.20}$ \\
\rowcolor{gray!40} \textsc{DGC-ER} & $74.03_{\pm0.55}$ & $58.35_{\pm0.81}$ & $38.00_{\pm0.34}$ \\
\hline
\textsc{DER++} & $82.72_{\pm0.17}$ & $65.57_{\pm0.42}$ & $47.23_{\pm1.22}$ \\
\rowcolor{gray!40} \textsc{DGC-DER++} & $\mathbf{82.93}_{\pm0.14}$ & $\mathbf{66.57}_{\pm0.20}$ & $\mathbf{50.08}_{\pm1.10}$ \\
\toprule[2pt]
\end{tabular}
\caption{The FAIA $\pm$ standard error(\%) in TFCL. The methods combined with DGC are colored in gray. The best results are highlighted in bold.}
\label{tab:taskfree}
\end{table*}

\begin{table*}[ht]
    \centering
    \begin{tabular}{cccccccccc}
        \toprule[2pt]
        Methods(s) & GSS & XDER & AOP & MOCA & SSVRG & GCR & DGC & AGEM & ER \\
        \hline
        $T_1$ & $4959.29$ & $1083.86$ & $1418.49$ & $655.15$ & $216.38$ & $689.85$ & $218.46$ & $212.76$ & $214.34$ \\
        $T_2$ & $4281.39$ & $2495.60$ & $1412.48$ & $1192.89$ & $873.39$ & $690.34$ & $666.08$ & $478.99$ & $345.05$ \\
        $T_3$ & $4158.51$ & $2826.82$ & $1415.24$ & $1173.05$ & $878.89$ & $686.16$ & $668.86$ & $488.91$ & $340.36$ \\
        \toprule[2pt]
    \end{tabular}
    \caption{Training time of the first three task in CIL on S-CIFAR100.}
    \label{tab:training time}
\end{table*}

\textbf{Detailed results in TFCL} Table~\ref{tab:taskfree} shows that our DGC method can improve the performance in TFCL. The DGC-DER++ method achieves the best performance in all the cases.

{  \textbf{Training Time} We set $b=500$ and conduct experiments on S-CIFAR100 to measure the training time of several algorithms on the first three tasks in the CIL scenario. Table~\ref{tab:training time} shows that our method and GCR have roughly the same training time, which is higher than AGEM but lower than other baselines. The results on the remaining tasks and other datasets are similar. Therefore, the increase caused by our method on training time is not significant, compared with most of the baselines.  }

\end{document}